\def\alg{\Bar{\pi}}
\def\pdist{p(M)}
\def\podist{p(M|h_t)}
\def\Me {\Bar{M}}
\def\Sbar {\Bar{\mathcal{S}}}
\def\se{\Bar{s}}
\def\Rbar {\Bar{R}}
\def\Tbar {\Bar{T}} 
\def\eQ{\Bar{Q}}
\def\eV{\Bar{V}}
\def\Greedy{Certainty-Equivalent}
\def\gsup{c}
\def\VoI{\eV_I}
\def\VoO{\eV_O}
\def\qhat{\mathsmaller{\hat{\eQ}}}
\def\Fshape{F}
\def\eq{=}
\newcommand\Tstrut{\rule{0pt}{2.9ex}}         % "top" strut
\theoremstyle{plain}
\newtheorem{theorem}{Theorem}[section]
\newtheorem{lemma}{Lemma}[section]
\theoremstyle{definition}
\newtheorem{definition}[theorem]{Definition}
\theoremstyle{remark}
\newtheorem*{remark}{Remark}
\newenvironment{proofsketch}{%
  \proof}{\endproof}
\newcommand{\raisedtarget}[1]{%
  \raisebox{\fontcharht\font`P}[0pt][0pt]{\hypertarget{#1}{}}%
}
\def\eqref#1{equation~\ref{#1}}
\def\1{\bm{1}}
\def\eps{{\epsilon}}
\DeclareMathAlphabet{\mathsfit}{\encodingdefault}{\sfdefault}{m}{sl}
\SetMathAlphabet{\mathsfit}{bold}{\encodingdefault}{\sfdefault}{bx}{n}
\DeclareMathOperator*{\argmax}{arg\,max}
\title{BAMDP Shaping: a Unified Framework for \\ Intrinsic Motivation and Reward Shaping}
\author{Aly Lidayan,
  Michael Dennis, Stuart Russell\\
  University of California, Berkeley \\
  \texttt{\{dayan,michael\_dennis,russell\}@cs.berkeley.edu}
}
\def\@testdef #1#2#3{%
  \def\reserved@a{#3}\expandafter \ifx \csname #1@#2\endcsname
 \reserved@a  \else
\typeout{^^Jlabel #2 changed:^^J%
\meaning\reserved@a^^J%
\expandafter\meaning\csname #1@#2\endcsname^^J}%
\@tempswatrue \fi}
\begin{document}

\maketitle

\begin{abstract}
Intrinsic motivation and reward shaping guide reinforcement learning~(RL) agents by adding pseudo-rewards, which can lead to useful emergent behaviors. However, they can also encourage counterproductive exploits, e.g., fixation with noisy TV screens. Here we provide a theoretical model which anticipates these behaviors, and provides broad criteria under which adverse effects can be bounded.
We characterize all pseudo-rewards as reward shaping in Bayes-Adaptive Markov Decision Processes~(BAMDPs), which formulates the problem of learning in MDPs as an MDP over the agent's knowledge. 
Optimal exploration maximizes BAMDP state value, which we decompose into the value of the information gathered and the prior value of the physical state. Psuedo-rewards guide RL agents by rewarding behavior that increases these value components, while they hinder exploration when they align poorly with the actual value. 
We extend potential-based shaping theory~\citep{ng1999policy} to prove BAMDP Potential-based shaping Functions~(BAMPFs) are immune to reward-hacking (convergence to behaviors maximizing composite rewards to the detriment of real rewards) in meta-RL, and show empirically how a BAMPF helps a meta-RL agent learn optimal RL algorithms for a Bernoulli Bandit domain. 
We finally prove that BAMPFs with bounded monotone increasing potentials also resist reward-hacking in the regular RL setting. We show that it is straightforward to retrofit or design new pseudo-reward terms in this form, and provide an empirical demonstration in the Mountain Car environment. 
\end{abstract}

\section{Introduction}
A common approach to guiding RL agents in sparse reward settings is to add pseudo-rewards to the true rewards. A wide range of pseudo-rewards have been used successfully in complex environments~\citep{pathak2017curiosity,berseth2019smirl,hafner2023mastering}; more domain-agnostic terms are typically known as intrinsic motivation (IM) while reward shaping is often more task-specific \citep{schmidhuber1991possibility,dorigo1994robot,barto2004intrinsically}. However, designing them is challenging, and they can affect performance in counter-intuitive ways leading to degenerate behaviors~\citep{taiga2021bonus,Clark_Amodei}. For instance, rewarding accurate prediction of the next percept may cause an agent to sit forever in front of a blank wall~\citep{rhinehart2021information}, while favoring states where the next percept is hard to predict causes an agent to get stuck watching a ``noisy TV" that randomly flips channels, rather than encouraging exploration as one might expect \citep{burda2018exploration}. 

Avoiding these behaviors requires understanding their root causes. We approach this by analyzing pseudo-rewards within a theoretical framework for how a rational RL agent {\em should} behave---the Bayes-Adaptive MDP (BAMDP)~\citep{bellman1959adaptive,martin1967bayesian}, a generalization of Bayesian bandits~\citep{gittins1979bandit}. In a BAMDP, the agent starts out not knowing what MDP it is operating in and learns more through experience. BAMDP states consist of the cumulative information gathered, i.e., the entire history $h_t$ of MDP states, actions, and rewards observed up to and including the current state of the MDP. An RL algorithm can be viewed as a {\em policy} mapping the BAMDP state to an action that updates it \citep{duff2002optimal}, and optimal RL algorithms maximize the expected BAMDP return, i.e., the expected return while exploring and learning. 

Within this framework, we can understand both reward shaping and IM as \textit{BAMDP reward shaping}, i.e., functions over BAMDP states, that guide exploration by incentivizing behavior that leads to more valuable BAMDP states. We decompose BAMDP state value into the value of the total information collected (VOI) and the value of the current MDP state under the prior, which we call the value of opportunity (VOO). This yields a new, principled typology of pseudo-reward terms based on the different types of BAMDP value that they signal. 
Many IM terms, e.g., prediction error~\citep{pathak2017curiosity} or information gain~\citep{houthooft2016vime}, can be understood as incentivizing exploration by adding rewards to behavior that increases VOI. Other pseudo-rewards attempt to steer exploration by compensating for incorrect implicit VOO priors resulting from the initialization of RL algorithms. For example, goal proximity reward shaping~\citep{colombetti1996robot} compensates for overly pessimistic VOO priors for states near the goal, while surprise minimization IM~\citep{berseth2019smirl} and joint angle violation penalties~\citep{ji2023dribblebot} compensate for overoptimistic VOO estimates for risky states. We can furthermore understand how pseudo-rewards mislead exploration as resulting from aligning poorly with actual value---e.g., watching a noisy TV yields high prediction error but no valuable information. 

Next, we prove conditions under which pseudo-rewards cannot be ``hacked" (where RL agents converge to final behaviors that maximize composite rewards to the detriment of the real rewards) in both RL and meta-RL settings.
Potential-Based reward Shaping Functions (PBSFs) take the form $\gamma\phi(s')-\phi(s)$, where potential $\phi$ encourages going to higher value states by encoding their desirability, e.g., Fig.~\ref{fig:PBS}. PBSFs on the BAMDP state (BAMPFs), e.g., Fig.~\ref{fig:BAMPF}, may encode desirability of both the physical state \textit{and} the total information gathered. Since they may depend on the entire training history, it is easy to convert most pseudo-rewards to BAMPFs.  
We extend results from \citet{ng1999policy} to prove that BAMPFs always preserve the optimality of RL algorithms. Thus, we can use BAMPFs to guide meta-RL without it converging on RL agents that are optimal for the modified rewards but not the underlying domain. We empirically validate this by using a BAMPF to guide a meta-RL agent to learn an optimal RL strategy for a Bernoulli bandits domain.
Next, we return to the regular RL setting and prove that BAMPFs based on potential functions that are bounded and monotone increasing will also preserve the optimal MDP policy at convergence, i.e., no RL algorithm can find a reward-hacking MDP policy maximizing shaped return but not real return. It is still straightforward to convert many pseudo-rewards into this form, which we illustrate with a case-study converting Curiosity~\citep{pathak2017curiosity} to combat the Noisy TV problem. We finally demonstrate the practical utility of our framework in a larger-scale RL experiment. We use the BAMDP value decomposition to inform the design of an effective potential function for the continuous Mountain Car environment~\citep{brockman2016openai}, and show that the resulting BAMPF improves exploration while avoiding reward-hacking.

\begin{figure*}[t!]
    \centering
    \begin{subfigure}[t]{0.23\textwidth}
        \centering
        \includegraphics[height=1.2in]{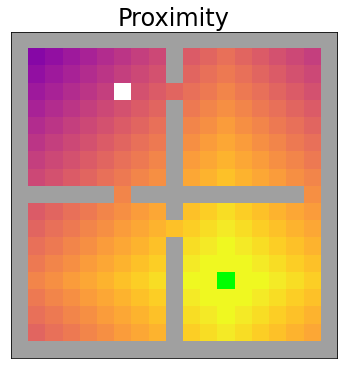}
        \caption{$\phi(s)=-\|\mathbf{s}-\mathbf{g}\|_1$, i.e., the goal proximity.}
        \label{fig:PBS}
        \vspace{-5pt}
    \end{subfigure}
     \hfill
    \begin{subfigure}[t]{0.75\textwidth}
        \centering
        \includegraphics[height=1.2in]{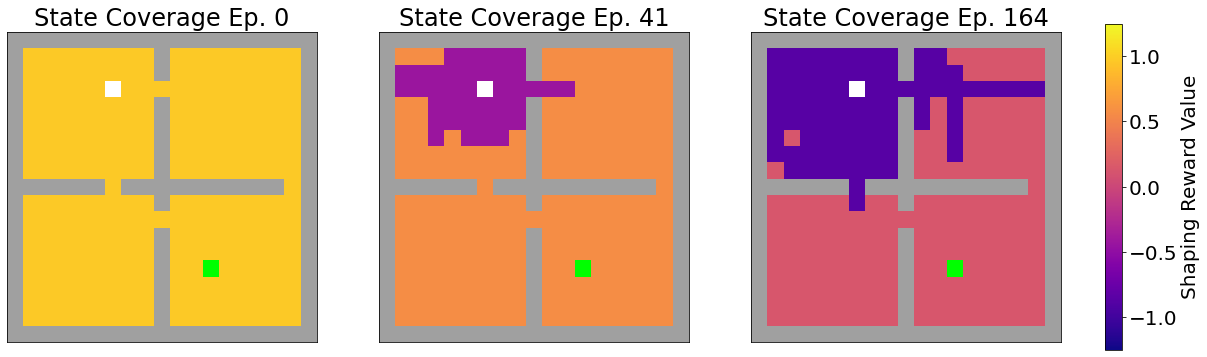}
        \caption{$\phi(h)=|\{s\in h\}|$, i.e., the count of all unique states visited so far. Over training, the agent must get further from $s_0$ to reach new states and increase $\phi(h)$.}
        \label{fig:BAMPF}
        \vspace{-5pt}
    \end{subfigure}
    \caption{Example potential-based reward shaping functions $\gamma\phi'-\phi$ for potentials over the MDP state (PBSF, \ref{fig:PBS}), and BAMDP state (BAMPF, \ref{fig:BAMPF}). Start and goal MDP states $s_0,g$ are marked white and green; other states are colored by the pseudo-reward for transitioning there directly from $s_0$ at the start of the episode. Each episode ends and $s$ is reset to $s_0$ every 50 steps, $\gamma=0.99$.}
    \label{fig:PBSvBAMPF}
    \vspace{-15pt}
\end{figure*}
\section{Background}
\subsection{Markov Decision Processes} 
\textit{Markov decision processes} (MDPs) are defined by tuple $M=(\mathcal{S},\mathcal{A},R,T,T_0,\gamma)$ with $\mathcal{S}$ a set of states, $\mathcal{A}$ a set of actions, $T(s'|s,a)$ and $R(r|s,a,s')$ transition and reward probability functions with expected reward $R(s,a,s')$, $T_0(s_0)$ an initial state distribution, and $\gamma$ a discount factor (when it is not critical, we write $R$ without $s'$ for brevity). MDP policies map from current states to distributions over next actions: $\pi(a | s)$. The return is defined as the discounted sum of rewards $G=\sum_{t=0}^\infty\gamma^tr_{t+1}$,
and an optimal policy $\pi^*$ maximizes the expected return.

\subsection{Intrinsic Motivation and Reward Shaping} 
\textit{Intrinsic motivation} (IM) and \textit{Reward Shaping} are methods for guiding RL algorithms by adding pseudo-rewards to the original reward at each step. IM functions $F(h_t)$ can depend on the entire history $h_t=s_0a_0r_1s_1...a_{t-1}r_ts_t$ (the sequence of all transitions observed so far), thus IM can depend on RL algorithms' internal states, and in general the composite reward function $R(s_t,a_t,s_{t+1})+F(h_{t+1})$ is not a valid MDP reward function. Reward shaping functions are restricted to the form $F(s,a,s')$, resulting in \textit{shaped} MDPs with reward function $R'(s,a,s')=R(s,a,s')+F(s,a,s')$. 
Optimal policies for a shaped MDP, i.e., maximizing composite return, are generally not optimal for the original MDP. \textit{Potential-based shaping functions} (PBSFs) take the form $F(s,a,s')=\gamma\phi(s')-\phi(s)$ and preserve optimal policies in all MDPs~\citep{ng1999policy}.

\subsection{Formulation of RL Problems as BAMDPs}\label{subsection:definition}

We formulate RL problems as BAMDPs, for which RL algorithms are policies. We use overlines, e.g., $\Me$, to denote the BAMDP version of any object. Our conventions are inspired by \citet{zintgraf2019varibad} and \citet{guez2012efficient}. 

RL algorithms learn to maximize return in an MDP by repeatedly updating an internal state (e.g., a Q-function estimate) while selecting actions and receiving observations. We denote them by $\alg: \mathcal{S} \times \mathcal{H} \rightarrow \mathcal{A}$, where $\mathcal{H}$ is the set of histories $h_t$ that are realizable, i.e., occur with non-zero probability for some action sequence~\citep{abel2024definition}. RL algorithms typically maintain a lossy memory of $h_t$, but it is a sufficient statistic for internal state, encapsulating all information $\alg$ might use to choose actions. We measure the performance of $\alg$ in MDP $M$ by its expected return \textit{while learning}: 
$\mathcal{J}_M(\alg)=\mathbb{E}_{M,\alg}[\sum_{t=0}^{\infty}\gamma^t R(s_t,a_t)]$.\footnote{We can convert settings with episodic environments or train/test regimes to this form by augmenting the state space, e.g., by adding within-episode step indices or train/test indicators.} A Bayes-optimal RL algorithm maximizes expected performance in MDPs sampled from prior $\pdist$, i.e., $\mathcal{J}(\alg)=\mathbb{E}_{\pdist}[\mathcal{J}_M(\alg)]$ \citep{singh2009rewards}. The prior represents the domain, i.e., the distribution of MDPs that the agent will encounter (e.g., MiniGrid mazes~\citep{MinigridMiniworld23} or Atari games). For clarity of exposition, we assume all MDPs possible under $p(M)$ share the same $\mathcal{S},\mathcal{A},\gamma$, so only $R,T,T_0$ are initially uncertain.\footnote{This formulation can be extended to POMDPs and for distributions over $\mathcal{S},\mathcal{A},\gamma$ without any conceptual changes---the agent receives observations $o_t$, and expectations are taken over additional variables as needed.}
We use $\podist$ to denote the posterior after updating $\pdist$ on the evidence in history $h_t$, i.e., $\podist \propto p(h_t|M)\pdist$. 

A BAMDP is thus defined by tuple $\Me=(\Sbar,\mathcal{A},\Rbar,\Tbar,\Tbar_0,\gamma)$ where:
\begin{itemize}
\item $\Sbar=\mathcal{S}\times \mathcal{H}$, i.e., an information-augmented state space, so each state $\se_t=\langle s_t,h_t\rangle$. 
\item $\mathcal{A}$ and $\gamma$ are shared with the underlying MDPs, although internally $\alg$ may sample its actions from an MDP policy $\pi_t$ that it learnt, i.e., $\alg(a|\se_t)=\pi_t(a|s_t)$.
\item $\Rbar(\se_t,a)=\mathbb{E}_{\podist}[R(s_t,a)]$, the expected reward under the current posterior.
\item 
$\Tbar(\se_{t+1} |\se_t,a_t)=\mathbb{E}_{\podist}[T(s_{t+1}|s_t,a_t)R(r_{t+1}|s_t,a_t)\mathbbm{1}[h_{t+1}\eq h_ta_tr_{t+1}s_{t+1}]]$
\item 
$\Tbar_0(\langle s_0,h_0\rangle) = \mathbb{E}_{\pdist}[T_0(s)]\mathbbm{1}[h_0=s_0]$. 
\end{itemize}

We illustrate the basic concepts of BAMDPs using the {\em caterpillar domain} shown in Fig.~\ref{fig:transgraph}. Here, $p(M)$ represents how butterflies usually lay eggs on the best food source in the area, but 10\% of the time a more rewarding source is nearby; upon hatching on the weed, the caterpillar does not know which MDP it is in and must decide whether exploring the neighboring bush is worth the energy and opportunity cost. The bush's reward varying across possible MDPs manifests as stochastic BAMDP dynamics at the transition where the caterpillar first observes it (e.g., taking the highlighted \textit{eat} action). After observing the reward, $\podist$ collapses to the underlying MDP and all dynamics become deterministic (e.g., at the transitions with red highlighted arrows). 
\begin{figure}
    \centering
    \includegraphics[width=0.95\textwidth]{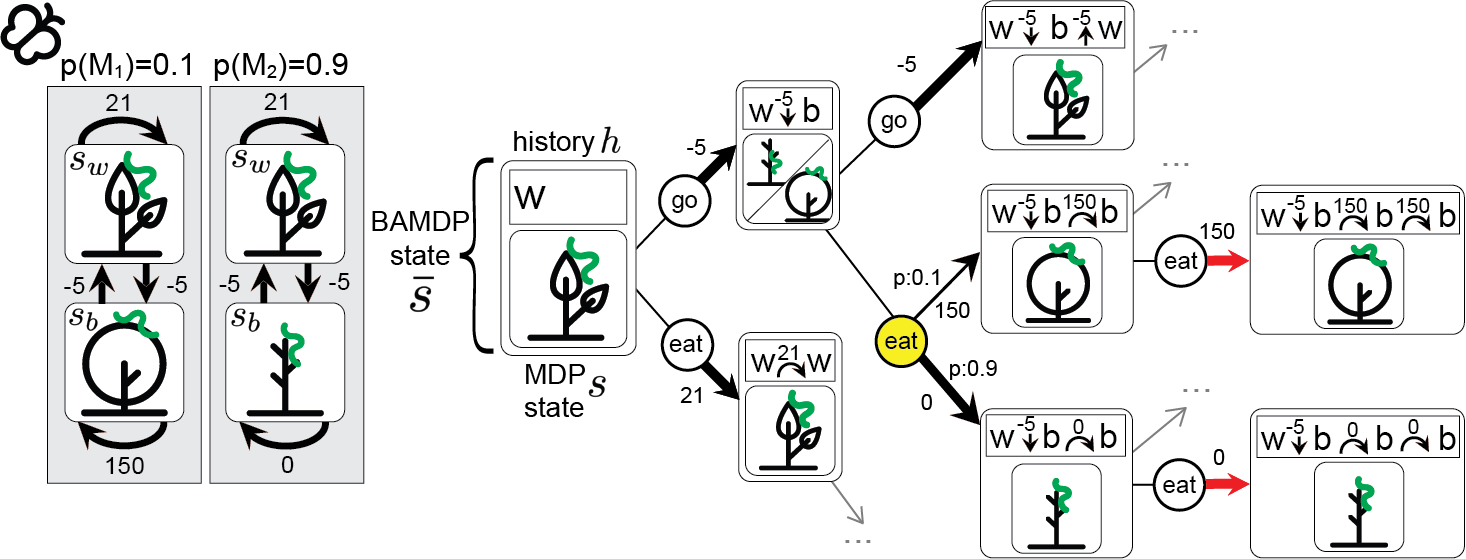}
    \caption{The caterpillar domain formulated as a BAMDP. Left: prior $\pdist$ is a categorical distribution over MDPs $M_1$ and $M_2$; in both, all transitions are deterministic, with the curved and straight arrows corresponding to \textit{eat} and \textit{go} actions respectively. The caterpillar hatches at state $s_w$, and must decide whether to \textit{eat} for guaranteed reward $21$, or incur $-5$ reward to \textit{go} to $s_b$. Right: truncated BAMDP transition diagram, arrows are labeled with rewards (and transition probabilities if $p<1$). The stochastic transitions (from the highlighted \textit{eat} action) are due to the uncertainty over the MDP; all future transitions (the highlighted arrows) become deterministic once its identity is revealed.} 
    \label{fig:transgraph}
    \vspace{-11pt}
\end{figure}

The expected return of an RL algorithm in a BAMDP is equal to its expected return while learning in the initially unknown MDP $M\sim \pdist$:
\begin{eqnarray*}
    \mathbb{E}_\alg[\bar{G}] &=&     \mathbb{E}_{\Tbar_0,\Tbar}\left[\sum_{t=0}^\infty\gamma^t\Rbar(\se_t,\alg(\se_t))\right]\\
    &=&\mathbb{E}_{\pdist}\left[\mathbb{E}_{M}\left[\sum\nolimits_{t=0}^\infty\gamma^tR(s_t,\alg(\se_t))\right]\right]
    =\mathbb{E}_\pdist\left[\mathcal{J}_M(\alg)\right]=\mathcal{J}(\alg).
\end{eqnarray*}

Thus, an optimal policy for the BAMDP, $\alg^*$ maximizing $\mathbb{E}_\alg[\bar{G}]$, is a Bayes-optimal RL algorithm with respect to prior $p(M)$, i.e., it explores optimally for that domain.\footnote{Note that, since Bayes-optimal RL algorithms explore only when expected to increase return, they don't always explore enough to converge on exactly optimal MDP policies $\pi^*$.
E.g., in the caterpillar domain, if $\gamma$ is small enough then immediate reward dominates $\mathbb{E}_\alg[\bar{G}]$, so $\alg^*$ eats at $s_w$ rather than risking delaying rewards by checking $s_b$ (see Appendix~\ref{appendix:opt_mining_calc} for full calculations).} 
Since $\mathbb{E}_\alg[\bar{G}]$ is calculated through future information states, $\alg^*$ must \textit{plan through its own learning}. E.g., in Fig.~\ref{fig:transgraph} with large enough $\gamma$, $\alg^*$ would deem $s_b$ worth exploring because it can stay there if it finds food, otherwise it will learn that $s_b$ is empty and use this knowledge to return to $s_w$ and never leave again.

\section{Pseudo-rewards Correct BAMDP Value Misestimation}\label{section:value_correction}

We first use our framework to explain how IM and reward shaping can incentivize effective exploration by signalling the value of BAMDP states.

\subsection{The Relationship Between Value Misestimation and Regret}\label{subsection:residual}
RL algorithms typically act, implicitly or explicitly, to maximize a value prediction estimated from the history, which we denote by $\mathsmaller{\hat{\eQ}}(\se_t,a)$. 
Applying the performance difference lemma~\citep{kakade2002approximately} at the BAMDP level shows how algorithmic regret is directly related to how these estimates differ from the Bayes-Optimal value over their trajectory:

\begin{lemma}\label{thm:regret_bound}
    The Bayesian regret~\citep{ghavamzadeh2015bayesian} of algorithms acting on estimate $\mathsmaller{\hat{\eQ}}(\se_t,a)$ can be expressed as:
    \begin{equation}
    \mathbb{E}_{\alg^*}[\bar{G}]-\mathbb{E}_{\alg}[\bar{G}] = \mathbb{E}_{\alg}\left[\sum\nolimits_t \gamma^t \left(\eV^*(\se_t)-\eQ^*\left(\se_t,\argmax\nolimits_a \mathsmaller{\hat{\eQ}}(\se_t,a)\right)\right)\right],
\end{equation}
\end{lemma}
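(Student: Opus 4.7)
The statement is essentially the Kakade–Langford performance difference lemma (PDL) instantiated at the BAMDP level, so the plan is to leverage that known identity rather than redo the telescoping argument from scratch. The BAMDP $\Me$ is itself an MDP over the augmented state space $\Sbar = \mathcal{S}\times\mathcal{H}$ with transition $\Tbar$ and reward $\Rbar$, so the standard PDL applies with no modification: for any two (possibly history-dependent) policies $\pi_1,\pi_2$ on $\Me$,
\begin{equation*}
\eV^{\pi_1}(\se_0)-\eV^{\pi_2}(\se_0)
=\mathbb{E}_{\pi_2}\!\left[\sum\nolimits_t \gamma^t\bigl(\eQ^{\pi_1}(\se_t,a_t)-\eV^{\pi_1}(\se_t)\bigr)\right].
\end{equation*}

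First I would specialize this to $\pi_1=\alg^*$ and $\pi_2=\alg$ and take expectation over $\Tbar_0$, which turns the left-hand side into exactly $\mathbb{E}_{\alg^*}[\bar G]-\mathbb{E}_\alg[\bar G]$ by the identity $\mathbb{E}_\alg[\bar G]=\mathbb{E}_{\Tbar_0}[\eV^\alg(\se_0)]$ already established in the excerpt. Second, I would swap the sign to write the summand as $\eV^*(\se_t)-\eQ^*(\se_t,a_t)$. Third, I would use the fact (stated in the lemma's preamble) that the algorithm $\alg$ acts greedily with respect to its internal estimate $\qhat$, i.e.\ the action sampled along the $\alg$-induced trajectory is $a_t=\argmax_a\qhat(\se_t,a)$; substituting this into the summand yields exactly the claimed expression.

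For rigor I would briefly justify the two ingredients behind PDL in this setting: (i) a one-step BAMDP Bellman identity $\eV^*(\se_t)=\mathbb{E}_{\Tbar}[\Rbar(\se_t,\pi^*(\se_t))+\gamma \eV^*(\se_{t+1})]$, and the analogous identity expressing $\eQ^*(\se_t,a_t)$ as the expected one-step reward plus discounted $\eV^*(\se_{t+1})$; and (ii) a telescoping argument along the trajectory sampled by $\alg$, where the rewards $\Rbar(\se_t,a_t)$ in the Bellman-expansion of $\eV^*$ cancel with the actual BAMDP returns generated by $\alg$, leaving only the per-step suboptimality gaps $\eV^*(\se_t)-\eQ^*(\se_t,a_t)$. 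Since $\alg$ is deterministic given $\se_t$ (it selects the $\argmax$ of $\qhat$), no randomness from the policy enters and the identity is exact rather than in expectation over $\alg$'s action noise.

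The only delicate point, and what I would flag as the main obstacle, is the interchange of the infinite sum and expectation used in the telescoping step: one needs either bounded rewards (so that $|\gamma^t(\eV^*-\eQ^*)|$ is uniformly bounded by $2\|\Rbar\|_\infty/(1-\gamma)\cdot\gamma^t$) or an $L^1$-integrability assumption on returns, after which dominated convergence closes the argument. This is a mild regularity condition standard in the BAMDP literature, and I would state it explicitly rather than prove it, since the paper tacitly assumes well-defined expected returns throughout. No other step is nontrivial; the lemma is essentially a one-line corollary of PDL once the BAMDP is recognized as an MDP and the deterministic greedy rule of $\alg$ is unpacked.
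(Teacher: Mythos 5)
Your proposal is correct and takes essentially the same route as the paper, which likewise proves this lemma simply by invoking the Kakade--Langford performance difference lemma on the BAMDP viewed as an ordinary MDP over $\Sbar$ and then substituting the greedy action $a_t=\argmax_a\qhat(\se_t,a)$. One small slip: the intermediate identity you display has the summand's sign reversed (the correct form with the trajectory drawn from $\pi_2$ is $\eV^{\pi_1}(\se_0)-\eV^{\pi_2}(\se_0)=\mathbb{E}_{\pi_2}\bigl[\sum_t\gamma^t\bigl(\eV^{\pi_1}(\se_t)-\eQ^{\pi_1}(\se_t,a_t)\bigr)\bigr]$), though your subsequent ``swap the sign'' step lands you on the correct final expression.
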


Thus, if we can add pseudo-rewards that nudge the value estimates of RL agents $\mathsmaller{\hat{\eQ}}$ such that they select actions with higher BAMDP value $\eQ^*$, they will explore more optimally.

\subsection{BAMDP Value Decomposition}\label{subsection:voivoo}
We can now understand the role of pseudo-rewards as directly incentivizing RL agents to go to more valuable BAMDP states. IM terms that reward novel observations or diverse actions encode the value of the information in the BAMDP state $h_t$; these are popular because commonly used RL algorithms do not inherently account for the value of information (see Appendix~\ref{section:1step}). RL agents can also misestimate the value of physical states due to incorrect \textit{initial} beliefs; many other pseudo-rewards compensate for this type of misestimated value, typically when there is significant prior knowledge about how to maximize rewards, e.g., it is advantageous for a ball-dribbling robot to be closer to the ball~\citep{ji2023dribblebot}. This is helpful to express as a pseudo-reward because it can be difficult to program such prior knowledge into non-tabular (e.g., deep RL) algorithms before they begin learning. We decompose the BAMDP value function into these two values, which we call the Value of Information and Value of Opportunity, respectively.

\begin{definition}[Value of Information]\label{def:voi}
The Value of Information (VOI) from state $\se_t$ is the increase in $\alg^*$'s expected return from $s_t$ due to the information in $h_t$ compared to its initial beliefs:
\begin{equation}
     \eV^*_I(\langle s_t,h_t\rangle)= \eV^*(\langle s_t,h_t\rangle)-\eV^*(\langle s_t,h_0\rangle).
\end{equation}
\end{definition}
E.g, $h_t$ after watching TV may contain more information than $h_0$, but $\eV^*_I$ would be zero if that information can't help $\alg^*$ get higher return. Meanwhile, exploring a new section of a maze, even if it had no rewards, lets $\alg^*$ focus its search on a smaller remaining area, so $\eV^*_I$ would be positive. 

\begin{definition}[Value of Opportunity]
The Value of Opportunity (VOO) to $\alg$ from state $\se_t$ is the expected optimal value of state $s_t$ without having learnt anything, i.e.:
\begin{equation}
    \eV^*_O(\langle s_t,h_t\rangle)=\eV^*(\langle s_t,h_0\rangle).
\end{equation}
\end{definition}
E.g., if there is guaranteed reward at a known goal state $s_g$, then $s$ that are fewer steps from it have higher VOO. Conversely, if walking near an icy cliff edge always has a high chance of injury, then $s$ there have lower VOO. 

\begin{lemma}[BAMDP Value Decomposition]\label{theorem_decomposition}
The optimal BAMDP value can be decomposed into the Value of Information and the Value of Opportunity:
\begin{equation}
     \eV^*(\langle s_t,h_t\rangle)=\eV^*_I(\langle s_t,h_t\rangle)+\eV^*_O(\langle s_t,h_t\rangle).
\end{equation}
\end{lemma}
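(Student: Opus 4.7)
The plan is to verify the decomposition directly from the two definitions, since both VOI and VOO are defined as expressions involving $\eV^*(\langle s_t,h_t\rangle)$ and the counterfactual value $\eV^*(\langle s_t,h_0\rangle)$ that an optimal algorithm would assign to the same physical state $s_t$ under a fresh empty history. The key observation is that the definitions were chosen precisely so that one quantity isolates the gain from having history $h_t$ over having no history, and the other captures the baseline value under no history. Adding them should produce a telescoping cancellation.

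Concretely, the first step is to substitute the definition of $\eV^*_I(\se_t) = \eV^*(\langle s_t,h_t\rangle) - \eV^*(\langle s_t,h_0\rangle)$ into the right-hand side of the claim. The second step is to substitute the definition of $\eV^*_O(\se_t) = \eV^*(\langle s_t,h_0\rangle)$. The third step is to observe that the $-\eV^*(\langle s_t,h_0\rangle)$ from the VOI and the $+\eV^*(\langle s_t,h_0\rangle)$ from the VOO cancel, leaving exactly $\eV^*(\langle s_t,h_t\rangle) = \eV^*(\se_t)$, which is the left-hand side.

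There is no real obstacle here: the lemma is an identity that follows by rearrangement of the two definitions, and the only thing one needs to check is that both terms are evaluated at the same BAMDP state $\se_t = \langle s_t,h_t\rangle$, so that the counterfactual $\eV^*(\langle s_t,h_0\rangle)$ appearing in each definition refers to the same quantity and can be cancelled. The substantive content of the lemma is therefore not in the derivation but in the choice of decomposition; it establishes that any misestimation of $\eV^*$ by $\qhat$ can be attributed to misestimation of one (or both) of these two interpretable components, which sets up the typology of pseudo-rewards discussed in Section~\ref{section:value_correction}.
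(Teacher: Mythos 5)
Your proof is correct and is exactly the intended argument: the lemma is an immediate identity from the definitions of $\eV^*_I$ and $\eV^*_O$, with the counterfactual term $\eV^*(\langle s_t,h_0\rangle)$ cancelling, which is why the paper states it without a separate written proof. Nothing is missing.
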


We can now categorize IM and reward shaping terms by which of these components they signal (see Table~\ref{table:voivoo}). We can also understand the failure of pseudo-rewards to effectively guide RL algorithms as a result of them aligning poorly with the true BAMDP value. For example, negative surprise~\citep{berseth2019smirl} signals the prior knowledge that unpredictable parts of the environment have lower $\eV^*_O$. This is well-aligned for environments with dangerous dynamics, but fails in safe environments where unpredictability correlates poorly with negative outcomes, causing agents to stare at a wall rather than exploring \citep{rhinehart2021information}. Meanwhile, entropy bonuses~\citep{szepesvari2010algorithms,mnih2016asynchronous,haarnoja2017reinforcement} signal that more $\eV^*_I$ can be gained by trying a wider spread of actions. This breaks down if the scale of the pseudo-rewards is too high, because overly random behavior is unlikely to reach interesting novel states. Thus, the bonuses must be carefully balanced with the scale and frequency of the extrinsic rewards \citep{hafner2023mastering}. See Appendix~\ref{appendix:rxegs} for more in-depth discussion of these and more examples.

\begin{table}[ht!]
\caption{Typology of IM and reward shaping terms based on the value components (Value of Information $\VoI^*$ and Value of Opportunity $\VoO^*$) they signal. Attractive signals increase with measures of underestimated value, and repulsive signals decrease with metrics warning of overestimated value.}
\label{table:voivoo}
\begin{center}
\begin{tabular}{ p{0.11\textwidth} | p{0.27\textwidth} | p{0.25\textwidth} | p{0.26\textwidth} } 
   & \centering \textbf{No $\VoO^*$ Signal} & \centering \textbf{Repulsive $\VoO^*$ Signal} & \textbf{Attractive $\VoO^*$ Signal} 
   \vspace{2pt}\\
  \hline 
  \Tstrut
 \centering \textbf{Attractive $\VoI^*$  Signal} & 
 
  \begin{minipage}[t]{\linewidth}
      \begin{itemize}[noitemsep,topsep=1pt,leftmargin=*]
        \item Prediction error (\citenum{pathak2017curiosity,burda2018exploration})
        \item Count-based (\citenum{bellemare2016unifying,burda2018exploration})
        \item Entropy bonus (\citenum{szepesvari2010algorithms,mnih2016asynchronous})
        \item Skill discovery (\citenum{sharma2019dynamics,warde2018unsupervised})
        \item Info gain (\citenum{houthooft2016vime,shyam2019model})
      \end{itemize}
      
      \vspace{0pt}
  \end{minipage}&
  
  \begin{minipage}[t]{\linewidth}
     \begin{itemize}[topsep=0pt,leftmargin=*]
    \item Empowerment\\ 
    (\citenum{klyubin2005empowerment,gregor2016variational})
        \item Information Capture (\citenum{rhinehart2021information})
      \end{itemize}
 
 \end{minipage}& 
  \begin{minipage}[t]{\linewidth}
  \begin{itemize}[topsep=0pt,leftmargin=*]
    \item Optimism bonus\\
    (\citenum{sorg2012variance,qian2019exploration})
    \item Subtask unlocking (\citenum{hafner2021benchmarking})
  \end{itemize}

  \end{minipage}\\  
  \hline
  \Tstrut
  \textbf{No $\VoI^*$ Signal} & 
  &
  \begin{minipage}[t]{\linewidth}
  %[noitemsep,
  \begin{itemize}[leftmargin=*]
    \item Surprise minimization (\citenum{berseth2019smirl})
    \item Information cost (\citenum{eysenbach2021robust})
    \item Joint angle violation penalty (\citenum{ji2023dribblebot})
  \end{itemize} 
  \end{minipage}&
   \begin{minipage}[t]{\linewidth}
   \begin{itemize}[leftmargin=*]
    \item Goal proximity (\citenum{ng1999policy,ghosh2018learning})
    \item Subgoal reaching\\ 
    (\citenum{paul2019learning,sowerby2022designing})
    \item Ball possession (\citenum{ji2023dribblebot})
  \end{itemize}
  \end{minipage}\\  
\end{tabular}
\end{center}
\vspace{-10pt}
\end{table}
\section{Preserving Optimality with BAMDP Potential-Based Shaping}\label{section:pbs}
Section~\ref{section:value_correction} provides principles for designing pseudo-rewards to guide effective exploration towards states of increasing BAMDP value. Another important desideratum for pseudo-rewards is that they do not prevent RL agents from converging on good behaviors once exploration is complete. As our algorithms become more powerful, they may get better at finding ``reward-hacking" behaviors that maximize shaped rewards to the detriment of the underlying rewards we care about. This can be avoided by using pseudo-rewards that \textit{preserve optimality}, i.e., guarantee that any behavior maximizing shaped rewards also maximizes underlying rewards. Using our framework, we define a form of pseudo-reward which makes it easy to retrofit pseudo-reward functions, or design new terms, which preserve optimality: BAMDP Potential-based shaping Functions (BAMPFs).
We prove that the use of BAMPFS can avoid reward-hacking in both RL (i.e., optimizing over MDP policies) and meta-RL (i.e., optimizing over RL algorithms) settings.

\subsection{Definition of BAMDP Potential-Based Shaping Functions}
Classic potential-based shaping guides RL agents toward more valuable MDP states by encoding their desirability with the potential function $\phi(s)$  (e.g., Fig.~\ref{fig:PBS} where $\phi(s)$ is the proximity to the goal state). A BAMDP Potential-Based Shaping Function (BAMPF) can guide RL agents by encoding the desirability of BAMDP states $\phi(\bar{s})$ (or equivalently, $\phi(h)$). This allows them to signal not only the MDP state value (i.e., VOO), but also the value of the information accumulated (VOI), encouraging exploration to gather valuable experiences.

\begin{definition}(BAMPF)
    Let any $\Sbar,\mathcal{A},\gamma$ be given. Function $\Fshape$ is a \textbf{BAM}DP \textbf{P}otential-Based Shaping \textbf{F}unction if there exists a real-valued function $\phi$ such that for all $h_t\in\mathcal{H}-\{h_0\}$,
    \begin{equation}
    \Fshape(h_t)=\gamma\phi(h_t)-\phi(h_{t-1}),
    \end{equation}
    where $h_{t-1}$ is the first $t-1$ timesteps of $h_t$. See Fig.~\ref{fig:BAMPF} for an example of a simple BAMPF based on the size of the set of MDP states visited in $h$. 
\end{definition}
 E.g., \textit{Information Gain}~\citep{houthooft2016vime} expressed as the decrease in entropy of the agent's belief of the MDP dynamics~$\hat{p}(T)$, i.e., $H(\hat{p}(T|h_{t-1}))- H(\hat{p}(T|h_t))$, can be viewed as a BAMPF for $\gamma=1$ with $\phi(h)=-H(\hat{p}(T|h))$, i.e., the certainty of the posterior belief after updating on $h$, encouraging exploration towards states where the agent knows more about the environment. 
Due to the expressivity of $\phi$, it is straightforward to convert virtually any IM or reward shaping function to a BAMPF: take whatever measure of the desirability of the information and/or MDP state that it is based on, and use that measure as $\phi$.

\subsection{BAMPFs Preserve Optimality in Meta-RL}
We first consider the use of BAMPFs for guiding meta-RL systems to discover better RL algorithms more efficiently, without misleading them to converge on suboptimal reward-hacking behaviors. In this setting, the meta-learner generates RL algorithms $\alg_\theta$, updating $\theta$ to maximize $\alg_\theta$'s expected performance in MDPs sampled from $\pdist$, with the goal of eventually meta-learning a Bayes-Optimal $\alg^*$. Pseudo-rewards can be added to shape the returns the meta-RL agent observes, guiding it to generate better $\alg_\theta$. E.g., entropy bonuses could encourage generation of $\alg_\theta$ that explore a wider range of actions. But if the pseudo-reward doesn't preserve optimality, it could cause the meta-learner to converge on a $\alg_\theta$ that explores badly with respect to the true task distribution. We extend PBSF theory to prove BAMPFs preserve optimality in BAMDPs, i.e., in meta-RL problems. 

\subsubsection{BAMDP Potential-Based Shaping Theorem}
We model the effect of pseudo-reward function $\Fshape(h_t)$ as producing \textit{shaped BAMDP} $\Me'$ with shaped reward $\Rbar'(\se_t,a,\se_{t+1})=\Rbar(\se_t,a)+\Fshape(h_{t+1})$, while $\Sbar$ and $\Tbar$ are unchanged, i.e., $h_t$ still only contains the underlying rewards. This reflects the fact that $\Fshape(h_t)$ is fully known from the start, so it should not be treated as information or influence the posterior. The RL algorithm observes the underlying rewards in $h_t$, while only the meta-learner receives the pseudo-rewards in this setting. An optimal algorithm for $\Me'$ maximizes the expected shaped return from $\Rbar'$, i.e., $\alg^{*\prime}=\argmax_\alg \mathbb{E}_\alg[\bar{G}']$.

We now state our main theorem that BAMPFs always preserve Bayes-Optimality. 
\begin{restatable}[BAMDP Potential-Based Shaping Theorem]{theorem}{fta}\label{thm:maintheorem}
    For a pseudo-reward function to guarantee that the optimal algorithm for any shaped BAMDP is optimal for the original BAMDP, i.e., Bayes-optimal for the underlying RL problem, it is necessary and sufficient for it to be a potential-based shaping function on the BAMDP state.
\end{restatable}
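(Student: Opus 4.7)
The plan is to prove both directions: sufficiency (BAMPFs preserve Bayes-optimality on every BAMDP) via a telescoping-sum argument extended to histories, and necessity (any non-BAMPF admits a BAMDP on which it distorts the optimal algorithm) via an explicit counterexample construction.

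For sufficiency, I would start from $\Fshape(h_t)=\gamma\phi(h_t)-\phi(h_{t-1})$ with bounded $\phi$ and expand the shaped return along any realized trajectory:
\[
\bar{G}' \;=\; \sum_{t\ge 0}\gamma^t\Rbar(\se_t,a_t) \;+\; \sum_{t\ge 0}\gamma^t\Fshape(h_{t+1}) \;=\; \bar{G} \;+\; \sum_{t\ge 0}\bigl(\gamma^{t+1}\phi(h_{t+1})-\gamma^t\phi(h_t)\bigr).
\]
The second sum telescopes to $\lim_{T\to\infty}\gamma^{T+1}\phi(h_{T+1})-\phi(h_0)$, and the limit vanishes since $\gamma<1$ and $\phi$ is bounded, giving $\bar{G}'=\bar{G}-\phi(h_0)$ along every trajectory. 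Taking expectations yields $\mathbb{E}_\alg[\bar{G}']=\mathbb{E}_\alg[\bar{G}]-\mathbb{E}_{\Tbar_0}[\phi(h_0)]$, a shift independent of $\alg$; hence $\argmax_\alg\mathbb{E}_\alg[\bar{G}']=\argmax_\alg\mathbb{E}_\alg[\bar{G}]$, so every $\alg^{*\prime}$ optimal for $\Me'$ is also Bayes-optimal for $\Me$.

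For necessity, I would argue contrapositively, first proving the algebraic characterization that $\Fshape$ is a BAMPF (with bounded potential) iff the tail integral $\Psi_\tau(h_t):=\sum_{s>t}\gamma^{s-t-1}\Fshape(h_s)$ depends only on $h_t$ and not on the continuation $\tau$. The forward direction is the telescoping computation from the previous paragraph; for the reverse, the assignment $\phi(h_t):=-\Psi(h_t)$ recovers the potential via the recursion $\Psi(h_{t-1})=\Fshape(h_t)+\gamma\Psi(h_t)$. Hence, if $\Fshape$ is not a BAMPF, there must exist some $h_t$ and two infinite continuations $\tau,\tau'$ with $\Psi_\tau(h_t)\neq\Psi_{\tau'}(h_t)$. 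I would then construct a BAMDP---a point-mass prior suffices, so histories need only record past actions and rewards without any posterior update---whose dynamics realize both $\tau$ and $\tau'$ as possible futures from $h_t$ via a choice between two actions at $s_t$, and tune the original reward structure so that $\alg^*$ strictly prefers the tail with smaller $\Psi$ by some gap $\epsilon>0$ chosen small enough that $|\Psi_\tau(h_t)-\Psi_{\tau'}(h_t)|>\epsilon$, forcing $\alg^{*\prime}$ to strictly flip its preference and thereby fail Bayes-optimality for $\Me$.

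The main obstacle will be this necessity construction itself. Unlike the MDP case, BAMDP histories do not close into cycles, so the classical Ng--Harada--Russell ``cycle of non-zero potential'' counterexample does not port directly, and the argument must work with divergent infinite tails rather than a single looping state-action pair. The subtlety is that modifying the BAMDP's original rewards also modifies its realizable histories (and hence the $\Fshape$ values they carry), so the reward-tuning step cannot be entirely decoupled from the choice of $\tau,\tau'$; I will likely need to pick a reward scaling that preserves the $\Psi$ gap while letting the original-return gap $\epsilon$ be made arbitrarily small. A minor technical point on the sufficiency side is justifying boundedness of $\phi$ given bounded $\Fshape$, which falls out under the standard assumption that shaped BAMDP values are finite.
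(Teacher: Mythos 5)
Your sufficiency argument is the same telescoping computation as the paper's ($\mathbb{E}_\alg[\bar{G}'] = \mathbb{E}_\alg[\bar{G}] - \phi_0$, hence identical argmax), and your remark that boundedness of $\phi$ is needed for the tail term $\gamma^{T+1}\phi(h_{T+1})$ to vanish is a point the paper leaves implicit. The gap is in necessity. Your plan is to exhibit two continuations $\tau,\tau'$ of some $h_t$ with $\Psi_\tau(h_t)\neq\Psi_{\tau'}(h_t)$, realize both as action choices in a constructed BAMDP, and then ``tune the original reward structure'' so that the unshaped return gap is positive but smaller than the $\Psi$ gap. But once $\tau$ and $\tau'$ are fixed there is nothing left to tune: a continuation of a history is a sequence $a_t r_{t+1} s_{t+1}\dots$, so the rewards along both branches --- and hence the unshaped return difference $G_\tau - G_{\tau'}$ --- are already determined by the very objects witnessing the $\Psi$ discrepancy. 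If that fixed difference happens to exceed the $\Psi$ gap, or already has the favorable sign, no preference flip occurs. And, as you yourself note, perturbing any reward changes the realizable histories and therefore the values $F$ takes on them; for an arbitrary $F$ (not continuous in the recorded rewards) there is no guarantee that any perturbation ``preserves the $\Psi$ gap,'' so the proposed rescue step is not available in general.

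The paper escapes this by never comparing two arbitrary tails. It defines the candidate potential explicitly along a canonical zero-reward absorbing continuation, $\phi(h)=-\sum_{t=0}^\infty\gamma^t F(h(a0s_a)_{t+1})$, so that ``not a BAMPF'' yields a single realizable one-step transition $h_1\to h_1a'r's_2$ with discrepancy $\Delta = F(h_1a'r's_2)-\gamma\phi(h_1a'r's_2)+\phi(h_1)\neq 0$. The counterexample MDP (point-mass prior, as in your sketch) then has exactly one adjustable reward, $R(s_1,a)=r'+\Delta/2$, placed on the first step of the alternative absorbing branch; both branches thereafter follow the canonical continuation, so the remaining $F$ terms are the ones already pinned down in the definition of $\phi$, and the offset $\Delta/2$ sits strictly between $0$ and $\Delta$, forcing $\alg^*$ and $\alg^{*\prime}$ to disagree at $\langle s_1,h_1\rangle$. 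Your characterization lemma (BAMPF iff continuation-independent tail sums) is correct and is essentially what the paper's explicit $\phi$ encodes, but I would restructure the necessity argument around a one-step violation of the potential identity with a fixed canonical continuation, rather than around two arbitrary tails, since the latter removes exactly the degree of freedom your tuning step requires.
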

\begin{proofsketch}
     For sufficiency, we show that the $\phi$ form a telescoping sum, reducing to a constant. For necessity, we construct a BAMDP such that when $\Fshape-(\gamma\phi'-\phi)$ is nonzero, different actions maximize the shaped and extrinsic returns. The proof follows \cite{ng1999policy} but rewards impact the BAMDP state, making it more complex than for regular MDPs; see \ref{subsection:pbsproofs} for full proofs.
\end{proofsketch}

Thus, if the meta-learner converges to an RL algorithm $\alg^{*\prime}$ that learns optimally for the BAMPF-shaped rewards, this algorithm will also always explore optimally with respect to the true reward distribution.
Meanwhile, if we use a pseudo-reward function that is not a BAMPF, there will be settings where $\alg^{*\prime}$ explores sub-optimally. We extend this result to prove that BAMPFS also preserve \textit{approximate} optimality of RL algorithms in Appendix~\ref{appendix:approx_opt}.

\subsubsection{Experiment: Shaping Meta-RL on Bernoulli Bandits}
We demonstrate this result in the Bernoulli Bandit meta-RL problem introduced by \citet{wang2016learning}. Every MDP in $p(M)$ has two arms, one with reward probability $0.1$ and the other $0.9$ (randomly assigned), and a budget of 10 pulls. At each step an RNN-based RL agent observes the last arm that was pulled, its payout, and how many pulls it has left, and chooses which arm to pull next. After 10 pulls, the episode ends and a new MDP is sampled from $p(M)$. The meta-RL algorithm, A2C~\citep{mnih2016asynchronous}, continually updates the RNN agent to maximize expected return in its 10-step lifetime, i.e., to find the RL strategy that optimally balances exploration and exploitation with respect to $p(M)$. 
\begin{figure*}[t!]
    \centering
    \begin{subfigure}[t]{0.48\textwidth}
        \centering
        \includegraphics[height=1.6in]{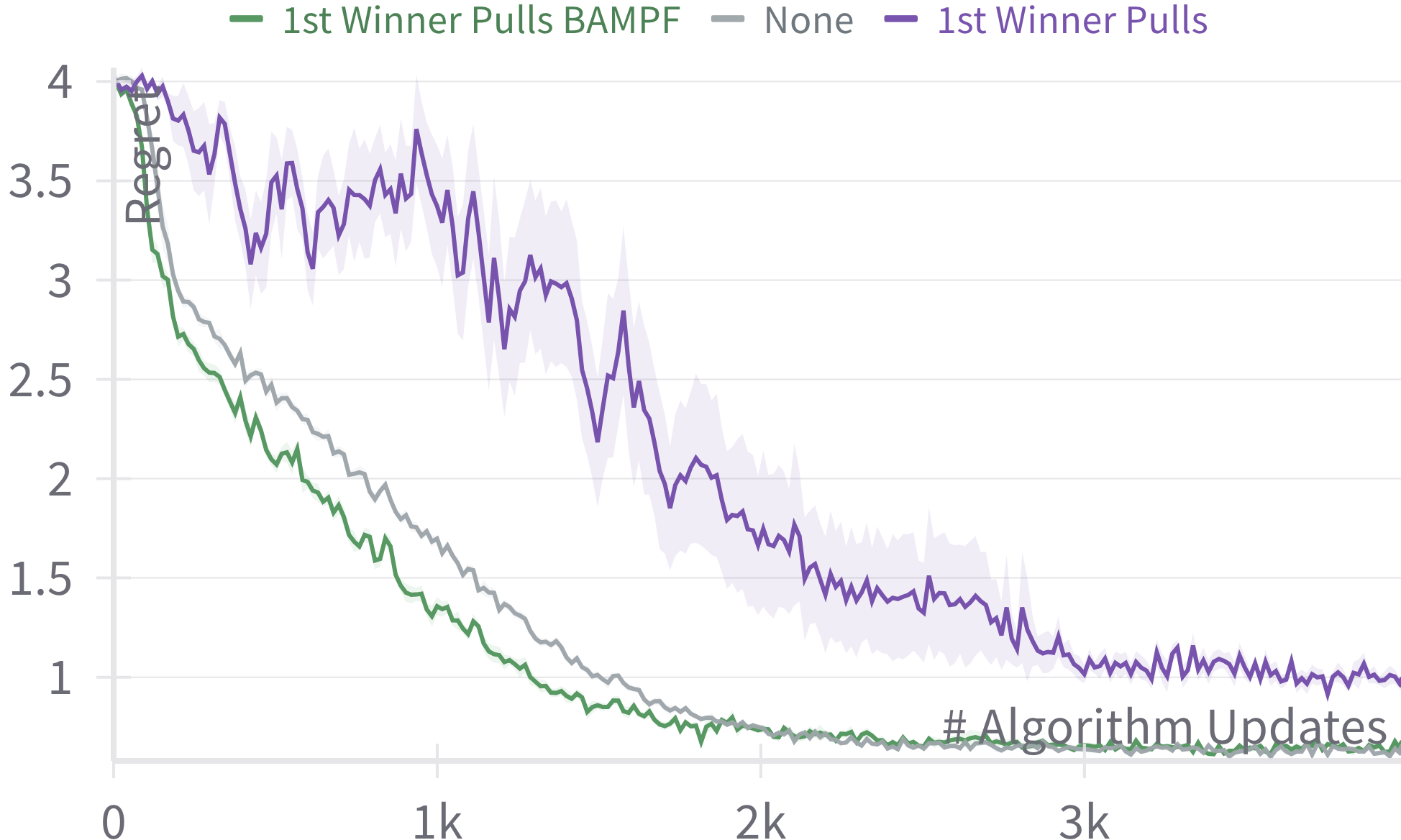}
        \caption{Average regret of the learnt RL agents as they are updated through meta-learning.}
        \label{fig:banditrealreturn}
    \end{subfigure}
    \hfill
    \begin{subfigure}[t]{0.48\textwidth}
        \centering
        \includegraphics[height=1.6in]{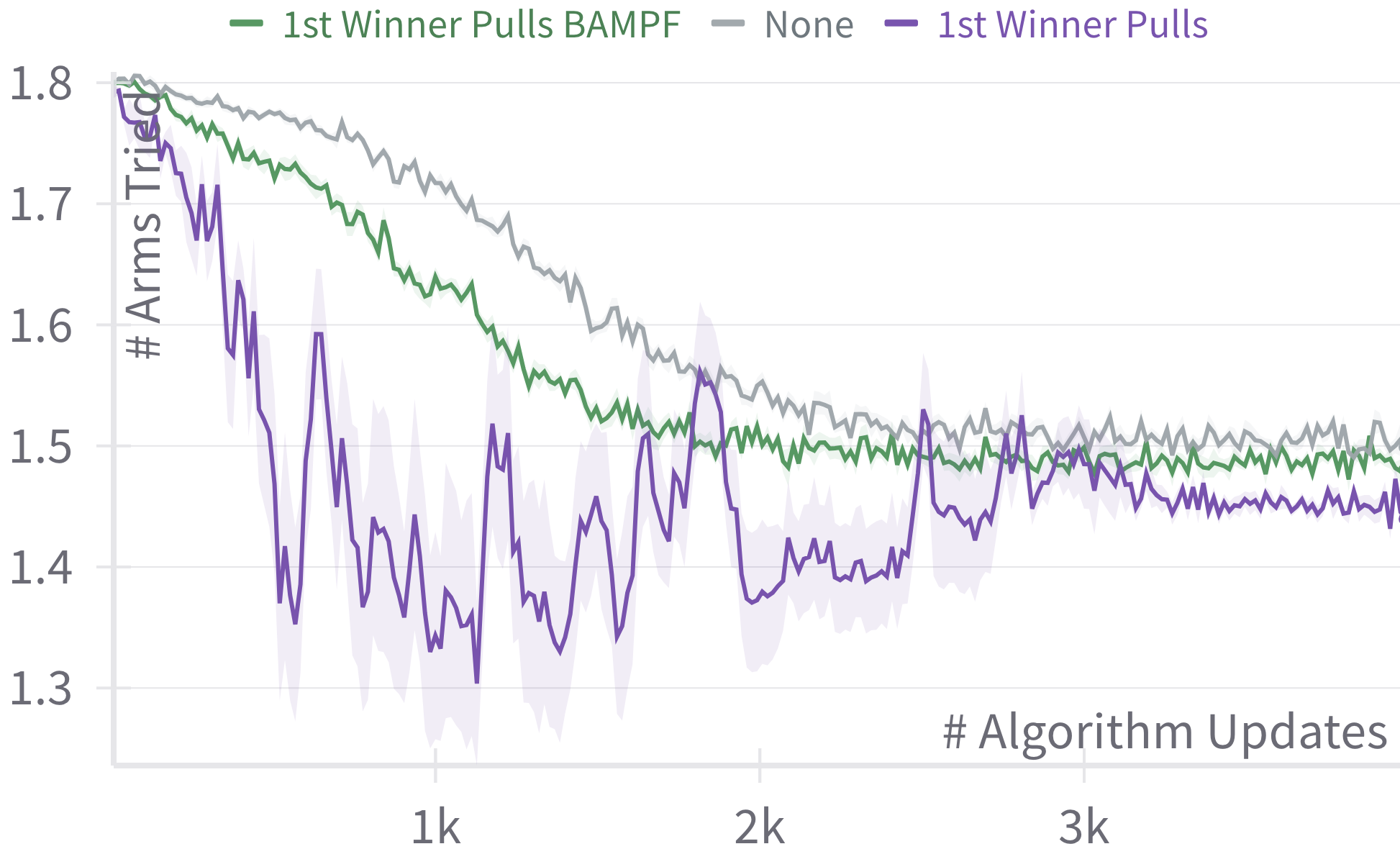}
        \caption{Average number of arms the learnt agents try in each MDP throughout meta-learning.}
        \label{fig:banditexploration}
    \end{subfigure}
    
    \caption{The effect of reward shaping on A2C meta-learning an RNN-based RL agent for Bernoulli Bandits with two arms, reward probabilities $(0.1,0.9)$ and a budget of 10 pulls. The mean and standard error of 10 seeds are plotted for each condition. Without shaping, the meta-learner gradually learns to generate RL agents that try fewer arms on average, avoiding over-exploration~(grey curve in \ref{fig:banditexploration}). The \textit{1st Winner Pulls BAMPF} (green) sets $\phi$ to the pull count of the first arm that yielded a reward, helping A2C learn to exploit and achieve lower regret more quickly, while still converging to the optimal strategy. However, when this pull count is used directly as a pseudo-reward (\textit{1st Winner Pulls}, purple), it causes the meta-learner to converge on an agent that over-exploits.}
    \label{fig:metaBandit}
    \vspace{-15pt}
\end{figure*}
Fig.~\ref{fig:banditexploration} shows how without shaping (the grey curve), the meta-learner gradually learns to generate RL agents that try fewer arms on average, i.e., avoiding over-exploration. This shows that the meta-learner initially over-estimates the VOI of trying more arms, so we could add a BAMPF to correct its prior. The \textit{1st Winner Pulls BAMPF} sets $\phi(h)$ to the total pull count of the first arm that produced a reward, decreasing the relative perceived VOI of exploring the other arm. Fig.~\ref{fig:banditexploration} (green curve) shows it helps A2C learn more exploitative RL agents more quickly, and Fig.~\ref{fig:banditrealreturn} shows that these agents achieve lower regret more quickly, while it doesn't prevent eventual convergence to an optimal agent. In contrast, when this pull count is used directly as a pseudo-reward (purple curve), it causes the meta-learner to converge on an agent that exploits too heavily and never achieves the optimal regret, because whenever the $p=0.1$ arm is by chance the first winner, the shaped return is maximized by continuing to pull it, regardless of its future payouts. See Appendix~\ref{appendix:metabandits} for full experimental details.

\subsection{Preserving Optimality in RL with BAMPFs}\label{section:mountaincar}
Now we turn our attention to the common RL setting, where an RL algorithm learns an MDP policy to maximize return. A certain natural type of BAMPF also resists hacking at the MDP level. These are BAMPFS based on $\phi$ that are bounded monotone increasing functions over time, i.e., $\phi(h_{t_1})\leq \phi(h_{t_2})$ for all realizable histories at training steps $t_1<t_2$ (so $h_{t_1}$ is a prefix of $h_{t_2}$).
\begin{restatable}[]{theorem}{settlingbampf}\label{thm:settlingtheorem}
If the pseudo-rewards added to an MDP can be expressed as a BAMPF with a bounded potential that is monotone increasing over time, it will eventually preserve approximate optimality in the MDP, i.e., 
\begin{equation}
     \forall \epsilon > 0 \quad \exists H : \forall t > H: \mathbb{E}_{\pi^{*'}_t}[G] > \mathbb{E}_{\pi^*}[G] - \epsilon,
\end{equation}
where $t$ is the training step, $\mathbb{E}_{\pi}[G]$ denotes expected unshaped return of an MDP policy $\pi$, $\pi^{*'}_t$ is a policy maximizing the shaped return at step $t$, and $\pi^*$ is an optimal policy for the underlying MDP.
\end{restatable}
\begin{proofsketch}
The BAMPF rewards form a telescoping sum over any MDP episode, leaving a single policy-dependent term: $\phi$ at the final step of the episode. Because $\phi$ is bounded and monotone, there must be a final step in training $H$ where it increases by at least $\epsilon$, after which the artificial advantage it can give the chosen policy is less than $\epsilon$. Therefore the optimal policy in the shaped MDP must be less than $\epsilon$ worse than the optimal policy in the true MDP. See Appendix~\ref{appendix:settlingproof} for the full proof.
\end{proofsketch}

This class of BAMPF naturally includes intrinsic motivation terms that signal the total value of information gathered so far, e.g., count-based rewards~\citep{bellemare2016unifying} or information gain~\citep{houthooft2016vime}, and they are simple and intuitive to design. 
For example, take a single-state MDP with 100 levers only one of which yields a reward when pulled. A bounded monotone $\phi$ could be the number of unique levers tried so far, encoding the VOI from having explored more levers. Fig.~\ref{fig:bampfValue} shows how, when applied to DQN, the BAMPF reward converges to a constant as more levers are tried. Fig.~\ref{fig:bampfExploration} shows how this BAMPF causes the agent to try more levers on average than without pseudo-rewards (which relies purely on $\epsilon-$greedy exploration). And yet, it doesn't prevent DQN from converging to the optimal policy, i.e., pulling the correct lever at every step (Fig.~\ref{fig:bampfCorrect}). Meanwhile, an entropy bonus (the entropy of the last 10 lever pulls, which isn't a BAMPF) makes the agent oscillate between pulling the correct lever to get the extrinsic reward, and pulling other levers to increase the entropy bonus, which achieves higher shaped rewards but lower true rewards than pulling the correct lever at every step. 

\begin{figure*}[t!]
    \centering
    \begin{subfigure}[t]{0.33\textwidth}
        \centering
        \includegraphics[height=1.4in]{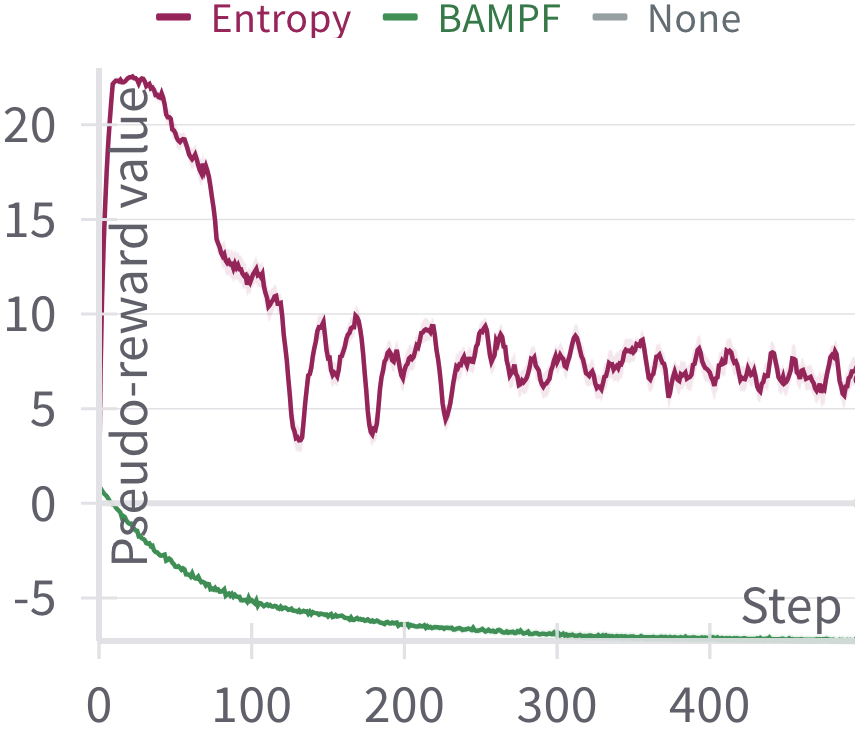}
        \caption{Psuedo-reward values over time.}
        \label{fig:bampfValue}
    \end{subfigure}
    \begin{subfigure}[t]{0.29\textwidth}
        \centering
        \includegraphics[height=1.4in]{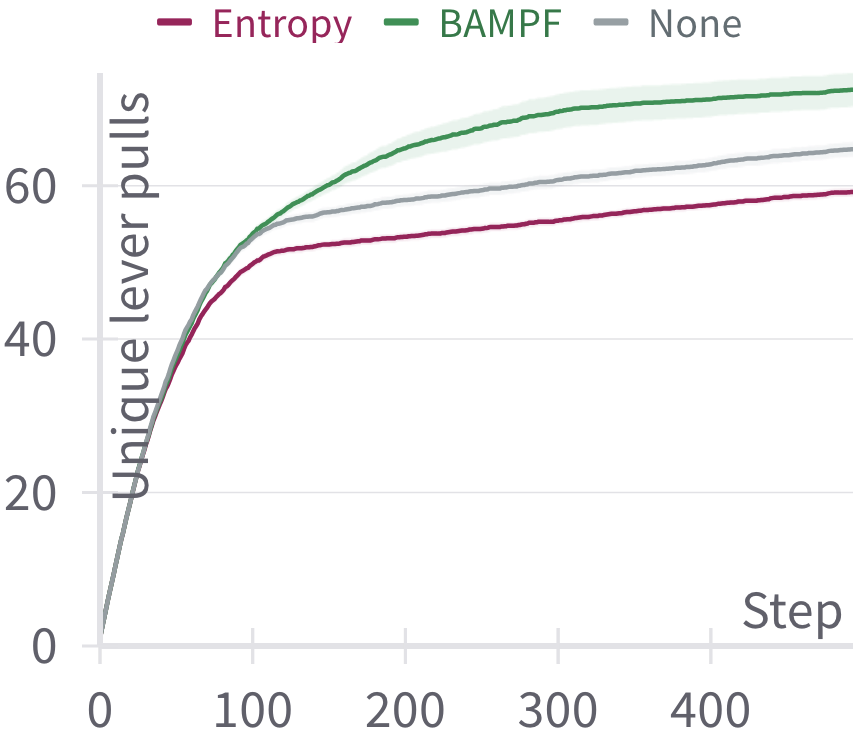}
        \caption{Levers tried over time.}
        \label{fig:bampfExploration}
    \end{subfigure}
    \begin{subfigure}[t]{0.36\textwidth}
        \centering
        \includegraphics[height=1.4in]{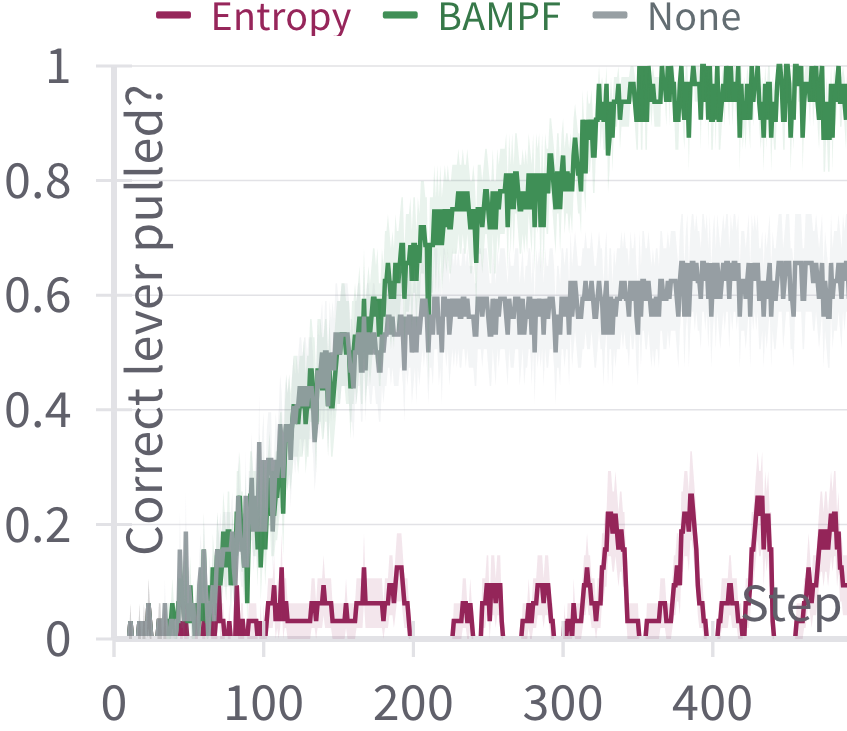}
        \caption{Correct lever pull fraction over time.}
        \label{fig:bampfCorrect}
    \end{subfigure}
    \caption{The effect of a bounded monotone BAMPF and entropy bonus pseudo-rewards on DQN in a 1-state MDP, where 1 in 100 total levers gives reward 10 when pulled. \textit{None} refers to DQN without any pseudo-rewards. The \textit{BAMPF} potential is the count of unique levers tried, and the \textit{Entropy} reward is 10x the entropy of the last 10 lever pulls. The setting is non-episodic with $\gamma=0.9$. The mean and standard error of 32 seeds are plotted for each condition. See Appendix~\ref{appendix:lever_dqn} for full details.}
    \vspace{-5pt}
\end{figure*}

\subsubsection{Experiment: Shaping RL in Mountain Car}
We now demonstrate the benefits of BAMPFs in a more realistic setting: with PPO~\citep{schulman2017proximal} in the Mountain Car environment~\citep{brockman2016openai} which has a continuous state comprising the car's position and velocity. The reward is -1 at each timestep; each episode lasts 200 steps or ends early if the car reaches the goal. The car starts in a valley, and to reach the goal (the flagpole in Fig.~\ref{fig:MCenv}) it must build momentum by first moving up the \textit{opposite} slope. Thus, displacement from the lowest point in the valley is an intuitive shaping reward to signal the VOO of being further uphill in either direction. We find this helps PPO reach greater displacements early in training (red curve in Fig.~\ref{fig:MCmaxpos}), but eventually results in reward-hacking policies that collect more pseudo-rewards by avoiding the goal (Figures \ref{fig:MCrealreturn} and \ref{fig:MCshaped}). Converting this to classic potential-based shaping by setting $\phi(s)$ to the displacement, we find it preserves optimality but doesn't help learning (blue curve in Fig.~\ref{fig:MCrealreturn}), possibly because the further uphill the car gets, the more negative PBS rewards it soon receives when it rolls back. We can understand these failures as a result of the fact that displacement is too weak a signal for the true VOO in Mountain Car. Instead, we could signal VOI by rewarding the \textit{maximum} displacement so far, which doesn't penalize temporary decreases in displacement. This depends on more than the current state, so it isn't a valid MDP potential but it \textit{is} a valid BAMDP potential $\phi(h_t)$. It is bounded due to the finite width of the environment, and as the maximum it monotonically increases throughout training. It signals the VOI of the RL agent learning how to get further uphill; because it sometimes surpasses its maximum displacement by chance before learning how to consistently get that far, we apply exponential smoothing. Fig.~\ref{fig:MCmaxpos} (green curve) shows the resulting BAMPF gets the agent to explore further more quickly, and Fig.~\ref{fig:MCrealreturn} shows it learns to reach the goal and maximize true rewards sooner while avoiding reward-hacking. See Appendix~\ref{appendix:mountaincar} for full experimental details.
\begin{figure*}[t!]
 \centering
     \hfill\begin{subfigure}[b]{0.65\textwidth}
         \centering
         \includegraphics[width=\textwidth]{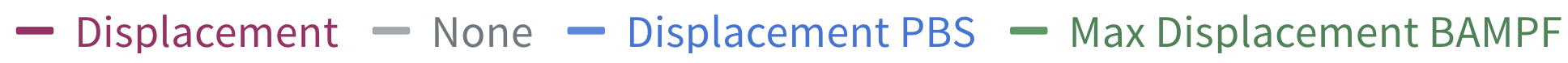}
         % \caption{$y=5/x$}
     \end{subfigure}\\\vspace{-0.05\baselineskip}
    \begin{subfigure}[b]{0.25\textwidth}
        \centering
        \includegraphics[width=\textwidth]{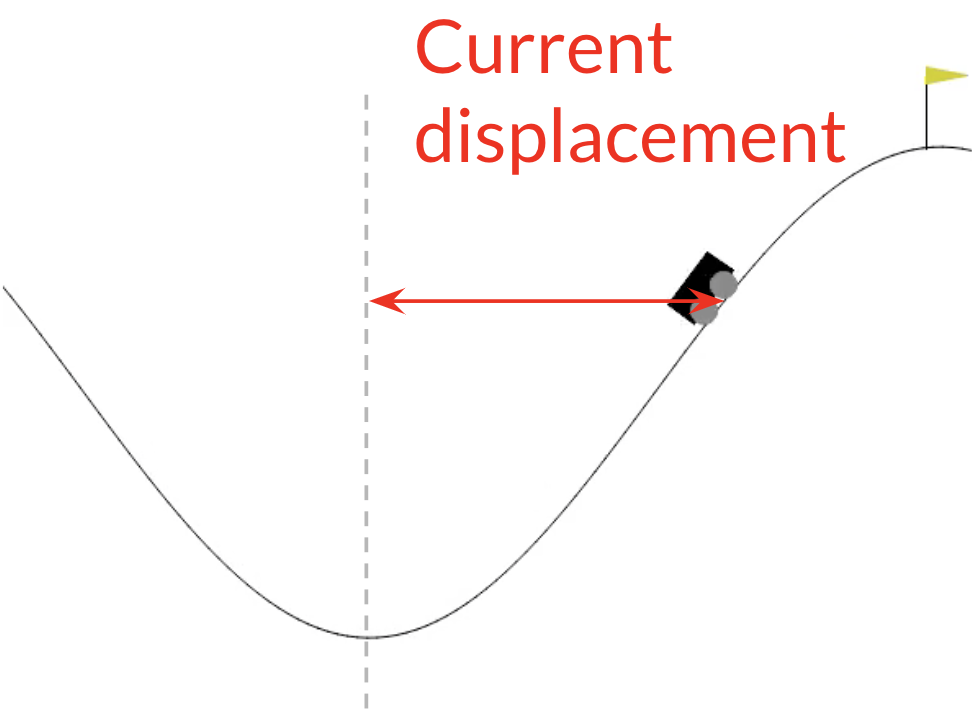}
        \caption{Visualization of the \\ environment.}
        \label{fig:MCenv}
    \end{subfigure}
    \hfill
    \begin{subfigure}[b]{0.37\textwidth}
        \centering
        \includegraphics[height=1.35in]{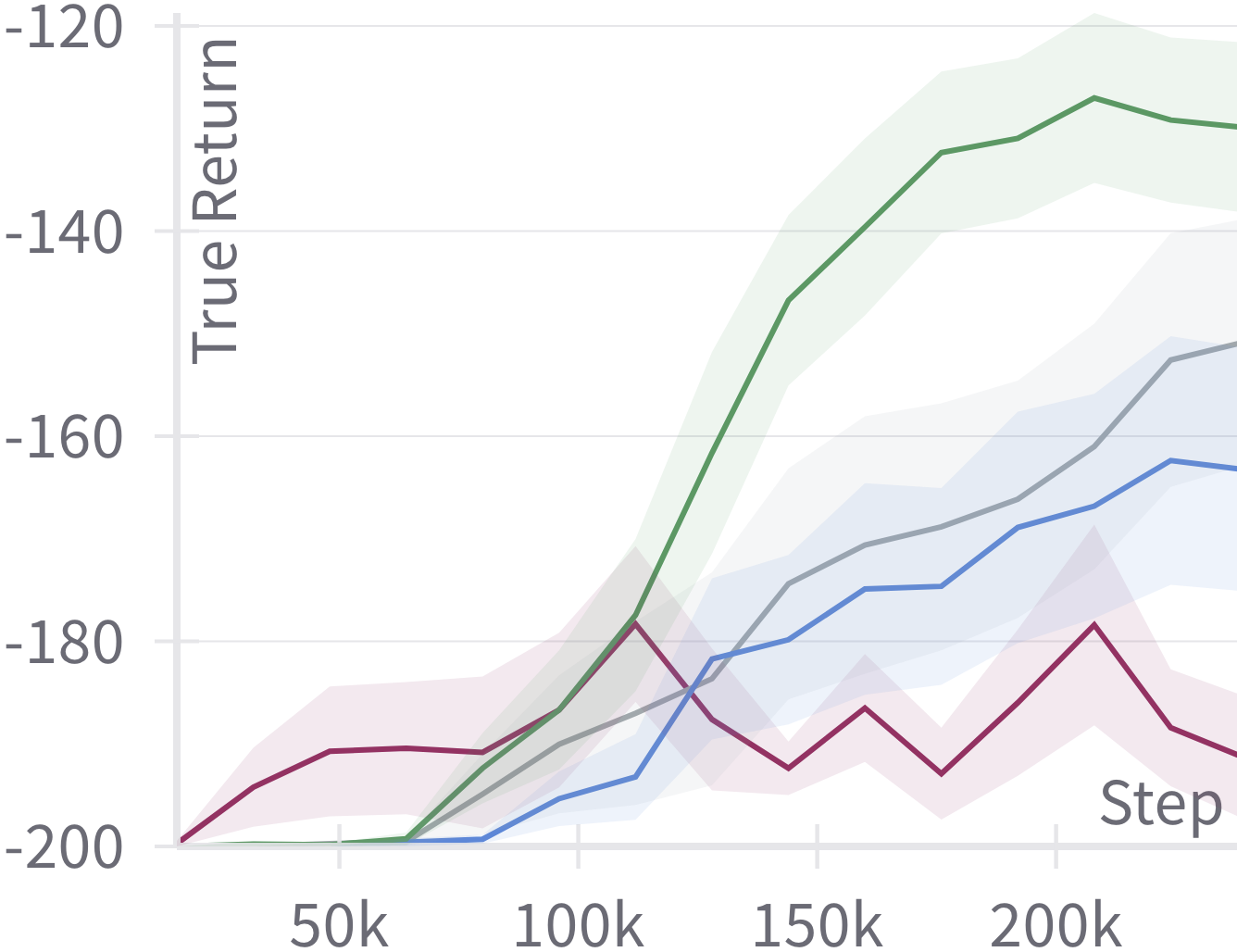}
        \caption{Episodic extrinsic returns of agents trained with each pseudo-reward type.}
        \label{fig:MCrealreturn}
    \end{subfigure}
    \hfill
    \begin{subfigure}[b]{0.35\textwidth}
        \centering
        \includegraphics[height=1.35in]{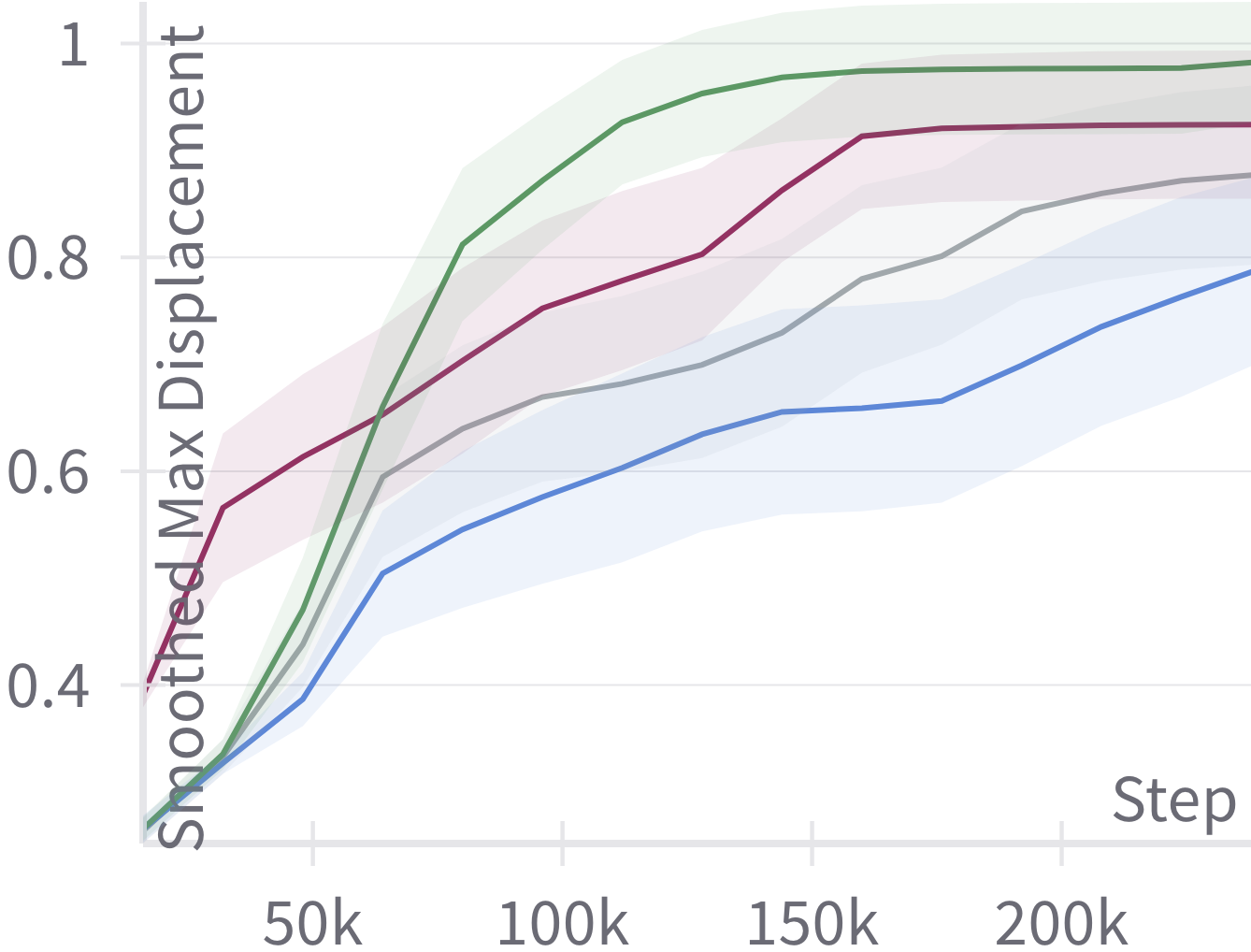}
        \caption{Exponentially smoothed maximum displacement over training.}
        \label{fig:MCmaxpos}
    \end{subfigure}
    \caption{The effect of pseudo-rewards on PPO in \textit{Mountain Car}; the mean and standard error of 10 seeds are plotted for each type of reward shaping. \textit{Displacement} rewards the current $x$ displacement of the car, \textit{Displacement PBS} is potential-based shaping with the current displacement as the MDP state potential $\phi(s)$, and \textit{Max Displacement BAMPF} uses the exponentially smoothed maximum displacement over training (\ref{fig:MCmaxpos}) as the BAMDP state potential $\phi(h)$. With \textit{Displacement} the agent learns a reward-hacking policy that avoids the goal to collect more pseudo-rewards (see Fig.~\ref{fig:MCshaped}), while the BAMPF helps PPO learn to reach the goal more quickly while preserving optimality (\ref{fig:MCrealreturn}).}
    \label{fig:mountainCar}
    \vspace{-15pt}
\end{figure*}

\subsubsection{Converting Pseudo-Rewards to Resist Reward-Hacking in RL}
Although virtually any pseudo-reward can be cast as a BAMPF, the monotonicity condition doesn't naturally include non-cumulative functions of just the latest behavior, e.g., prediction error (the predictability of the latest observations) or entropy bonuses (the entropy of the latest policy) which can fluctuate up or down. However, these terms would still preserve optimality if they could be converted to the sum of a bounded monotone increasing BAMPF and a classic PBSF. E.g., Curiosity~\citep{pathak2017curiosity} rewards the error of a learnt dynamics model in predicting the latest observations, to encourage continually seeking novel experiences while improving the dynamics model. The non-monotone component that encourages visiting novel states could be captured by a PBSF with $\phi(s)$ equal to the error of a \textit{fixed model}'s prediction of features of $s$, similar to RND~\citep{burda2018exploration}. The value of improving the dynamics model can be captured by a BAMPF with $\phi(h)$ equal to the \textit{maximum} accuracy so far of the dynamics model on a \textit{fixed set of transitions}. To prevent stagnation, the fixed model and transition set could be refreshed regularly between batch updates-- as long as they can't change \textit{within} an episode, optimality will be preserved. This form of Curiosity would not be susceptible to the Noisy TV Problem, since the dynamics model's accuracy $\phi(h)$ could not increase indefinitely while watching TV, and even if the fixed model's error $\phi(s)$ were higher at the TV, \citet{ng1999policy}'s result guarantees that a policy can't maximize total shaped return by staying there. See Appendix~\ref{appendix:curosity} for a more detailed exposition and numerical demonstration.
\section{Related Work}

\textbf{Theory of Intrinsic Motivation} \citet{oudeyer2007intrinsic} categorize
IM as knowledge-based, competence-based or morphological, and evaluate them by their exploration (leading to exploratory and investigative behavior) and organization (leading to structured and organized behavior) potentials, which are similar to the effects of signalling $\VoO$ and $\VoI$.
\citet{singh2009rewards} propose an evolutionary framework for rewards as maximizing expected fitness over a distribution of environments, concluding that optimal reward depends on regularities across the distribution and properties of the learning agent.  
\citet{aubret2023information} propose an information-theoretic taxonomy of IM, but this only captures IM terms signalling value of information; they don't consider the prior environment distribution and so cannot explain how pseudo-rewards should be designed for a given domain. 

\textbf{Extensions of Potential-Based Reward Shaping} \citet{devlin2012dynamic} and \citet{forbes2024potential} extend PBS theory to potential functions that vary over time, including functions of the history, showing that they also preserve the optimal MDP policy. However, \citet{devlin2012dynamic}'s proof omits the episodic setting, and \citet{forbes2024potential} modify the potential at the last step of each episode to cancel out all prior pseudo-rewards. This is inappropriate for potentials measuring the value of information, since observations from an episode do not lose all value once the episode ends.
\citet{eck2016potential} extend PBSFs to POMDP planning, defining potential functions over POMDP belief states
and categorizing them as Domain-dependent and Domain-independent, which share similarities to $\VoO$ and $\VoI$. \citet{kim2015reward} propose BAMDP potential-based shaping specifically for a model-based Bayesian RL method, although they do not prove that the PBS theorem still holds for BAMDPs (which is non-trivial, because rewards influence the BAMDP state) and do not consider the broader implications for developing pseudo-rewards for any RL algorithm.

See Appendix \ref{appendix:morerelated} for an extended discussion of related work.

\section{Discussion}
By formulating RL problems as BAMDPs, we formalize how intrinsic motivation and reward shaping help exploration by signalling the value of BAMDP states, which we decompose into the value of information and the value of opportunity. This allows us to characterize the roles of existing pseudo-rewards, explains when they impede exploration as misalignment with these values, and provides principles for designing new terms to guide learning in any given domain. This framework also gives us tools to tackle reward-hacking, i.e., to avoid incentivizing RL agents to converge on suboptimal final behaviors. We extend potential-based shaping theory to propose BAMPFs, a form of pseudo-reward that can avoid reward-hacking both in the typical RL setting, by preserving the optimality of the learnt MDP policies, and in meta-RL, by preserving the optimality of the learnt RL algorithms.
Although it is a manual process, it is straightforward to design and convert many existing pseudo-rewards into this form, as we illustrate with a case-study converting a prediction error IM term to avert the Noisy TV problem. Our experiments in the Mountain Car RL environment and Bernoulli Bandits meta-RL domain demonstrate the utility of our framework for designing practical pseudo-reward functions which both improve learning efficiency and avoid reward-hacking.
\section{Acknowledgments}
We would like to thank Cameron Allen, Justin Svegliato and Christian Guckelsberger for insightful discussions, and Benjamin Plaut, Micah Carroll, Niklas Lauffer, Hanlin Zhu and David Abel for feedback on drafts. We are also grateful to Cameron and Niklas for implementation advice. This material is supported in part by the Center for Human-Compatible AI (CHAI) and the Fannie and John Hertz Foundation.

\bibliography{iclr2025_conference}

\begin{thebibliography}{87}
\providecommand{\natexlab}[1]{#1}
\providecommand{\url}[1]{\texttt{#1}}
\expandafter\ifx\csname urlstyle\endcsname\relax
  \providecommand{\doi}[1]{doi: #1}\else
  \providecommand{\doi}{doi: \begingroup \urlstyle{rm}\Url}\fi

\bibitem[Abel et~al.(2024)Abel, Barreto, Van~Roy, Precup, van Hasselt, and Singh]{abel2024definition}
David Abel, Andr{\'e} Barreto, Benjamin Van~Roy, Doina Precup, Hado~P van Hasselt, and Satinder Singh.
\newblock A definition of continual reinforcement learning.
\newblock \emph{Advances in Neural Information Processing Systems}, 36, 2024.

\bibitem[Alet et~al.(2020)Alet, Schneider, Lozano-Perez, and Kaelbling]{alet2020meta}
Ferran Alet, Martin~F Schneider, Tomas Lozano-Perez, and Leslie~Pack Kaelbling.
\newblock Meta-learning curiosity algorithms.
\newblock \emph{arXiv preprint arXiv:2003.05325}, 2020.

\bibitem[Allen et~al.(2021)Allen, Parikh, Gottesman, and Konidaris]{allen2021learning}
Cameron Allen, Neev Parikh, Omer Gottesman, and George Konidaris.
\newblock Learning {M}arkov state abstractions for deep reinforcement learning.
\newblock \emph{Advances in Neural Information Processing Systems}, 34:\penalty0 8229--8241, 2021.

\bibitem[Aubret et~al.(2023)Aubret, Matignon, and Hassas]{aubret2023information}
Arthur Aubret, Laetitia Matignon, and Salima Hassas.
\newblock An information-theoretic perspective on intrinsic motivation in reinforcement learning: a survey.
\newblock \emph{Entropy}, 25\penalty0 (2):\penalty0 327, 2023.

\bibitem[Barto et~al.(2004)Barto, Singh, Chentanez, et~al.]{barto2004intrinsically}
Andrew~G Barto, Satinder Singh, Nuttapong Chentanez, et~al.
\newblock Intrinsically motivated learning of hierarchical collections of skills.
\newblock In \emph{Proceedings of the 3rd International Conference on Development and Learning}, volume 112, pp.\ ~19. Citeseer, 2004.

\bibitem[Bellemare et~al.(2016)Bellemare, Srinivasan, Ostrovski, Schaul, Saxton, and Munos]{bellemare2016unifying}
Marc Bellemare, Sriram Srinivasan, Georg Ostrovski, Tom Schaul, David Saxton, and Remi Munos.
\newblock Unifying count-based exploration and intrinsic motivation.
\newblock \emph{Advances in Neural Information Processing Systems}, 29, 2016.

\bibitem[Bellman \& Kalaba(1959)Bellman and Kalaba]{bellman1959adaptive}
Richard Bellman and Robert Kalaba.
\newblock On adaptive control processes.
\newblock \emph{IRE Transactions on Automatic Control}, 4\penalty0 (2):\penalty0 1--9, 1959.

\bibitem[Berseth et~al.(2019)Berseth, Geng, Devin, Rhinehart, Finn, Jayaraman, and Levine]{berseth2019smirl}
Glen Berseth, Daniel Geng, Coline Devin, Nicholas Rhinehart, Chelsea Finn, Dinesh Jayaraman, and Sergey Levine.
\newblock {SMIRL}: Surprise minimizing reinforcement learning in unstable environments.
\newblock \emph{arXiv preprint arXiv:1912.05510}, 2019.

\bibitem[Brockman et~al.(2016)Brockman, Cheung, Pettersson, Schneider, Schulman, Tang, and Zaremba]{brockman2016openai}
Greg Brockman, Vicki Cheung, Ludwig Pettersson, Jonas Schneider, John Schulman, Jie Tang, and Wojciech Zaremba.
\newblock Open{AI} gym. arxiv.
\newblock \emph{arXiv preprint arXiv:1606.01540}, 10, 2016.

\bibitem[Burda et~al.(2018)Burda, Edwards, Storkey, and Klimov]{burda2018exploration}
Yuri Burda, Harrison Edwards, Amos Storkey, and Oleg Klimov.
\newblock Exploration by random network distillation.
\newblock \emph{arXiv preprint arXiv:1810.12894}, 2018.

\bibitem[Chalkiadakis \& Boutilier(2003)Chalkiadakis and Boutilier]{chalkiadakis2003coordination}
Georgios Chalkiadakis and Craig Boutilier.
\newblock Coordination in multiagent reinforcement learning: A {B}ayesian approach.
\newblock In \emph{Proceedings of the Second International Joint Conference on Autonomous Agents and Multiagent Systems}, pp.\  709--716, 2003.

\bibitem[Chevalier-Boisvert et~al.(2023)Chevalier-Boisvert, Dai, Towers, de~Lazcano, Willems, Lahlou, Pal, Castro, and Terry]{MinigridMiniworld23}
Maxime Chevalier-Boisvert, Bolun Dai, Mark Towers, Rodrigo de~Lazcano, Lucas Willems, Salem Lahlou, Suman Pal, Pablo~Samuel Castro, and Jordan Terry.
\newblock Minigrid \& {M}iniworld: Modular \& customizable reinforcement learning environments for goal-oriented tasks.
\newblock \emph{CoRR}, abs/2306.13831, 2023.

\bibitem[Clark \& Amodei(2016)Clark and Amodei]{Clark_Amodei}
Jack Clark and Dario Amodei.
\newblock Faulty reward functions in the wild, 2016.
\newblock URL \url{https://openai.com/research/faulty-reward-functions}.

\bibitem[Colombetti et~al.(1996)Colombetti, Dorigo, and Borghi]{colombetti1996robot}
Marco Colombetti, Marco Dorigo, and Giuseppe Borghi.
\newblock Robot shaping: The hamster experiment.
\newblock In \emph{Proceedings of ISRAM}, volume~96, pp.\  28--30, 1996.

\bibitem[Cooijmans et~al.(2023)Cooijmans, Aghajohari, and Courville]{cooijmans2023meta}
Tim Cooijmans, Milad Aghajohari, and Aaron Courville.
\newblock Meta-value learning: a general framework for learning with learning awareness.
\newblock \emph{arXiv preprint arXiv:2307.08863}, 2023.

\bibitem[Dearden et~al.(1998)Dearden, Friedman, and Russell]{dearden1998bayesian}
Richard Dearden, Nir Friedman, and Stuart Russell.
\newblock {B}ayesian {Q}-learning.
\newblock \emph{{AAAI}/{IAAI}}, 1998:\penalty0 761--768, 1998.

\bibitem[Devlin \& Kudenko(2012)Devlin and Kudenko]{devlin2012dynamic}
Sam~Michael Devlin and Daniel Kudenko.
\newblock Dynamic potential-based reward shaping.
\newblock In \emph{Proceedings of the 11th International Conference on Autonomous Agents and Multiagent Systems}, pp.\  433--440. IFAAMAS, 2012.

\bibitem[Dorigo \& Colombetti(1994)Dorigo and Colombetti]{dorigo1994robot}
Marco Dorigo and Marco Colombetti.
\newblock Robot shaping: Developing autonomous agents through learning.
\newblock \emph{Artificial intelligence}, 71\penalty0 (2):\penalty0 321--370, 1994.

\bibitem[Duan et~al.(2016)Duan, Schulman, Chen, Bartlett, Sutskever, and Abbeel]{duan2016rl}
Yan Duan, John Schulman, Xi~Chen, Peter~L Bartlett, Ilya Sutskever, and Pieter Abbeel.
\newblock {RL}${}^{2}$: Fast reinforcement learning via slow reinforcement learning.
\newblock \emph{arXiv preprint arXiv:1611.02779}, 2016.

\bibitem[Duff(2002)]{duff2002optimal}
Michael~O'Gordon Duff.
\newblock \emph{Optimal Learning: Computational procedures for {B}ayes-adaptive {M}arkov decision processes}.
\newblock University of Massachusetts Amherst, 2002.

\bibitem[Eck et~al.(2016)Eck, Soh, Devlin, and Kudenko]{eck2016potential}
Adam Eck, Leen-Kiat Soh, Sam Devlin, and Daniel Kudenko.
\newblock Potential-based reward shaping for finite horizon online {POMDP} planning.
\newblock \emph{Autonomous Agents and Multi-Agent Systems}, 30:\penalty0 403--445, 2016.

\bibitem[Eysenbach et~al.(2021)Eysenbach, Salakhutdinov, and Levine]{eysenbach2021robust}
Ben Eysenbach, Russ~R Salakhutdinov, and Sergey Levine.
\newblock Robust predictable control.
\newblock \emph{Advances in Neural Information Processing Systems}, 34:\penalty0 27813--27825, 2021.

\bibitem[Eysenbach et~al.(2018)Eysenbach, Gupta, Ibarz, and Levine]{eysenbach2018diversity}
Benjamin Eysenbach, Abhishek Gupta, Julian Ibarz, and Sergey Levine.
\newblock Diversity is all you need: Learning skills without a reward function.
\newblock \emph{arXiv preprint arXiv:1802.06070}, 2018.

\bibitem[Finn et~al.(2017)Finn, Abbeel, and Levine]{finn2017model}
Chelsea Finn, Pieter Abbeel, and Sergey Levine.
\newblock Model-agnostic meta-learning for fast adaptation of deep networks.
\newblock In \emph{International Conference on Machine Learning}, pp.\  1126--1135. PMLR, 2017.

\bibitem[Foerster et~al.(2017)Foerster, Chen, Al-Shedivat, Whiteson, Abbeel, and Mordatch]{foerster2017learning}
Jakob~N Foerster, Richard~Y Chen, Maruan Al-Shedivat, Shimon Whiteson, Pieter Abbeel, and Igor Mordatch.
\newblock Learning with opponent-learning awareness.
\newblock \emph{arXiv preprint arXiv:1709.04326}, 2017.

\bibitem[Forbes et~al.(2024)Forbes, Gupta, Villalobos-Arias, Potts, Jhala, and Roberts]{forbes2024potential}
Grant~C Forbes, Nitish Gupta, Leonardo Villalobos-Arias, Colin~M Potts, Arnav Jhala, and David~L Roberts.
\newblock Potential-based reward shaping for intrinsic motivation.
\newblock \emph{arXiv preprint arXiv:2402.07411}, 2024.

\bibitem[Ghavamzadeh et~al.(2015)Ghavamzadeh, Mannor, Pineau, Tamar, et~al.]{ghavamzadeh2015bayesian}
Mohammad Ghavamzadeh, Shie Mannor, Joelle Pineau, Aviv Tamar, et~al.
\newblock {B}ayesian reinforcement learning: A survey.
\newblock \emph{Foundations and Trends{\textregistered} in Machine Learning}, 8\penalty0 (5-6):\penalty0 359--483, 2015.

\bibitem[Ghosh et~al.(2018)Ghosh, Gupta, and Levine]{ghosh2018learning}
Dibya Ghosh, Abhishek Gupta, and Sergey Levine.
\newblock Learning actionable representations with goal-conditioned policies.
\newblock \emph{arXiv preprint arXiv:1811.07819}, 2018.

\bibitem[Gittins(1979)]{gittins1979bandit}
John~C Gittins.
\newblock Bandit processes and dynamic allocation indices.
\newblock \emph{Journal of the Royal Statistical Society Series B: Statistical Methodology}, 41\penalty0 (2):\penalty0 148--164, 1979.

\bibitem[Gregor et~al.(2016)Gregor, Rezende, and Wierstra]{gregor2016variational}
Karol Gregor, Danilo~Jimenez Rezende, and Daan Wierstra.
\newblock Variational intrinsic control.
\newblock \emph{arXiv preprint arXiv:1611.07507}, 2016.

\bibitem[Grze\'{s}(2017)]{grzes2017reward}
Marek Grze\'{s}.
\newblock Reward shaping in episodic reinforcement learning.
\newblock In \emph{Proceedings of the 16th Conference on Autonomous Agents and MultiAgent Systems}, AAMAS '17, pp.\  565–573, Richland, SC, 2017. International Foundation for Autonomous Agents and Multiagent Systems.

\bibitem[Guez et~al.(2012)Guez, Silver, and Dayan]{guez2012efficient}
Arthur Guez, David Silver, and Peter Dayan.
\newblock Efficient {B}ayes-adaptive reinforcement learning using sample-based search.
\newblock \emph{Advances in Neural Information Processing Systems}, 25, 2012.

\bibitem[Haarnoja et~al.(2017)Haarnoja, Tang, Abbeel, and Levine]{haarnoja2017reinforcement}
Tuomas Haarnoja, Haoran Tang, Pieter Abbeel, and Sergey Levine.
\newblock Reinforcement learning with deep energy-based policies.
\newblock In \emph{International Conference on Machine Learning}, pp.\  1352--1361. PMLR, 2017.

\bibitem[Hafner(2021)]{hafner2021benchmarking}
Danijar Hafner.
\newblock Benchmarking the spectrum of agent capabilities.
\newblock \emph{arXiv preprint arXiv:2109.06780}, 2021.

\bibitem[Hafner et~al.(2023)Hafner, Pasukonis, Ba, and Lillicrap]{hafner2023mastering}
Danijar Hafner, Jurgis Pasukonis, Jimmy Ba, and Timothy Lillicrap.
\newblock Mastering diverse domains through world models.
\newblock \emph{arXiv preprint arXiv:2301.04104}, 2023.

\bibitem[Houthooft et~al.(2016)Houthooft, Chen, Duan, Schulman, De~Turck, and Abbeel]{houthooft2016vime}
Rein Houthooft, Xi~Chen, Yan Duan, John Schulman, Filip De~Turck, and Pieter Abbeel.
\newblock {VIME}: Variational information maximizing exploration.
\newblock \emph{Advances in Neural Information Processing Systems}, 29, 2016.

\bibitem[Howard(1966)]{howard1966information}
Ronald~A Howard.
\newblock Information value theory.
\newblock \emph{IEEE Transactions on Systems Science and Cybernetics}, 2\penalty0 (1):\penalty0 22--26, 1966.

\bibitem[Jackson et~al.(2024)Jackson, Lu, Kirsch, Lange, Whiteson, and Foerster]{jackson2024discovering}
Matthew~Thomas Jackson, Chris Lu, Louis Kirsch, Robert~Tjarko Lange, Shimon Whiteson, and Jakob~Nicolaus Foerster.
\newblock Discovering temporally-aware reinforcement learning algorithms.
\newblock \emph{arXiv preprint arXiv:2402.05828}, 2024.

\bibitem[Ji et~al.(2023)Ji, Margolis, and Agrawal]{ji2023dribblebot}
Yandong Ji, Gabriel~B Margolis, and Pulkit Agrawal.
\newblock Dribblebot: Dynamic legged manipulation in the wild.
\newblock In \emph{2023 IEEE International Conference on Robotics and Automation (ICRA)}, pp.\  5155--5162. IEEE, 2023.

\bibitem[Kaelbling et~al.(1996)Kaelbling, Littman, and Moore]{kaelbling1996reinforcement}
Leslie~Pack Kaelbling, Michael~L Littman, and Andrew~W Moore.
\newblock Reinforcement learning: A survey.
\newblock \emph{Journal of artificial intelligence research}, 4:\penalty0 237--285, 1996.

\bibitem[Kakade \& Langford(2002)Kakade and Langford]{kakade2002approximately}
Sham Kakade and John Langford.
\newblock Approximately optimal approximate reinforcement learning.
\newblock In \emph{Proceedings of the Nineteenth International Conference on Machine Learning}, pp.\  267--274, 2002.

\bibitem[Kim et~al.(2015)Kim, Lim, Lee, Noh, and Kim]{kim2015reward}
Hyeoneun Kim, Woosang Lim, Kanghoon Lee, Yung-Kyun Noh, and Kee-Eung Kim.
\newblock Reward shaping for model-based {B}ayesian reinforcement learning.
\newblock In \emph{Proceedings of the AAAI Conference on Artificial Intelligence}, volume~29, 2015.

\bibitem[Klyubin et~al.(2005)Klyubin, Polani, and Nehaniv]{klyubin2005empowerment}
Alexander~S Klyubin, Daniel Polani, and Chrystopher~L Nehaniv.
\newblock Empowerment: A universal agent-centric measure of control.
\newblock In \emph{2005 {IEEE} Congress on Evolutionary Computation}, volume~1, pp.\  128--135. IEEE, 2005.

\bibitem[Lange(2022{\natexlab{a}})]{gymnax2022github}
Robert~Tjarko Lange.
\newblock {gymnax}: A {JAX}-based reinforcement learning environment library, 2022{\natexlab{a}}.
\newblock URL \url{http://github.com/RobertTLange/gymnax}.

\bibitem[Lange(2022{\natexlab{b}})]{gymnaxblines2022github}
Robert~Tjarko Lange.
\newblock Training and speed evaluation tools for gymnax, 2022{\natexlab{b}}.
\newblock URL \url{https://github.com/RobertTLange/gymnax-blines}.

\bibitem[Lee et~al.(2021)Lee, Szot, Sun, and Lim]{lee2021generalizable}
Youngwoon Lee, Andrew Szot, Shao-Hua Sun, and Joseph~J Lim.
\newblock Generalizable imitation learning from observation via inferring goal proximity.
\newblock \emph{Advances in Neural Information Processing Systems}, 34:\penalty0 16118--16130, 2021.

\bibitem[Li et~al.(2021)Li, Gupta, Reddy, Pong, Zhou, Yu, and Levine]{li2021mural}
Kevin Li, Abhishek Gupta, Ashwin Reddy, Vitchyr~H Pong, Aurick Zhou, Justin Yu, and Sergey Levine.
\newblock Mural: Meta-learning uncertainty-aware rewards for outcome-driven reinforcement learning.
\newblock In \emph{International conference on machine learning}, pp.\  6346--6356. PMLR, 2021.

\bibitem[Lindley(1956)]{lindley1956measure}
Dennis~V Lindley.
\newblock On a measure of the information provided by an experiment.
\newblock \emph{The Annals of Mathematical Statistics}, 27\penalty0 (4):\penalty0 986--1005, 1956.

\bibitem[Lobel et~al.(2023)Lobel, Bagaria, and Konidaris]{lobel2023flipping}
Sam Lobel, Akhil Bagaria, and George Konidaris.
\newblock Flipping coins to estimate pseudocounts for exploration in reinforcement learning.
\newblock In \emph{International Conference on Machine Learning}, pp.\  22594--22613. PMLR, 2023.

\bibitem[Ma et~al.(2022)Ma, Sodhani, Jayaraman, Bastani, Kumar, and Zhang]{ma2022vip}
Yecheng~Jason Ma, Shagun Sodhani, Dinesh Jayaraman, Osbert Bastani, Vikash Kumar, and Amy Zhang.
\newblock {VIP}: Towards universal visual reward and representation via value-implicit pre-training.
\newblock \emph{arXiv preprint arXiv:2210.00030}, 2022.

\bibitem[Martin(1967)]{martin1967bayesian}
James~John Martin.
\newblock \emph{{B}ayesian Decision Problems and {M}arkov Chains}.
\newblock John Wiley and Sons Ltd, 1967.

\bibitem[Mikulik et~al.(2020)Mikulik, Del{\'e}tang, McGrath, Genewein, Martic, Legg, and Ortega]{mikulik2020meta}
Vladimir Mikulik, Gr{\'e}goire Del{\'e}tang, Tom McGrath, Tim Genewein, Miljan Martic, Shane Legg, and Pedro Ortega.
\newblock Meta-trained agents implement {B}ayes-optimal agents.
\newblock \emph{Advances in Neural Information Processing Systems}, 33:\penalty0 18691--18703, 2020.

\bibitem[Mnih(2016)]{mnih2016asynchronous}
Volodymyr Mnih.
\newblock Asynchronous methods for deep reinforcement learning.
\newblock \emph{arXiv preprint arXiv:1602.01783}, 2016.

\bibitem[Mnih et~al.(2015)Mnih, Kavukcuoglu, Silver, Rusu, Veness, Bellemare, Graves, Riedmiller, Fidjeland, Ostrovski, et~al.]{mnih2015human}
Volodymyr Mnih, Koray Kavukcuoglu, David Silver, Andrei~A Rusu, Joel Veness, Marc~G Bellemare, Alex Graves, Martin Riedmiller, Andreas~K Fidjeland, Georg Ostrovski, et~al.
\newblock Human-level control through deep reinforcement learning.
\newblock \emph{Nature}, 518\penalty0 (7540):\penalty0 529--533, 2015.

\bibitem[Mohamed \& Jimenez~Rezende(2015)Mohamed and Jimenez~Rezende]{mohamed2015variational}
Shakir Mohamed and Danilo Jimenez~Rezende.
\newblock Variational information maximisation for intrinsically motivated reinforcement learning.
\newblock \emph{Advances in Neural Information Processing Systems}, 28, 2015.

\bibitem[Ng et~al.(1999)Ng, Harada, and Russell]{ng1999policy}
Andrew~Y Ng, Daishi Harada, and Stuart Russell.
\newblock Policy invariance under reward transformations: Theory and application to reward shaping.
\newblock In \emph{Icml}, volume~99, pp.\  278--287. Citeseer, 1999.

\bibitem[Oudeyer \& Kaplan(2007)Oudeyer and Kaplan]{oudeyer2007intrinsic}
Pierre-Yves Oudeyer and Frederic Kaplan.
\newblock What is intrinsic motivation? {A} typology of computational approaches.
\newblock \emph{Frontiers in neurorobotics}, 1:\penalty0 108, 2007.

\bibitem[Pathak et~al.(2017)Pathak, Agrawal, Efros, and Darrell]{pathak2017curiosity}
Deepak Pathak, Pulkit Agrawal, Alexei~A Efros, and Trevor Darrell.
\newblock Curiosity-driven exploration by self-supervised prediction.
\newblock In \emph{International Conference on Machine Learning}, pp.\  2778--2787. PMLR, 2017.

\bibitem[Paul et~al.(2019)Paul, Vanbaar, and Roy-Chowdhury]{paul2019learning}
Sujoy Paul, Jeroen Vanbaar, and Amit Roy-Chowdhury.
\newblock Learning from trajectories via subgoal discovery.
\newblock \emph{Advances in Neural Information Processing Systems}, 32, 2019.

\bibitem[Qian et~al.(2019)Qian, Fruit, Pirotta, and Lazaric]{qian2019exploration}
Jian Qian, Ronan Fruit, Matteo Pirotta, and Alessandro Lazaric.
\newblock Exploration bonus for regret minimization in discrete and continuous average reward {MDP}s.
\newblock \emph{Advances in Neural Information Processing Systems}, 32, 2019.

\bibitem[Rhinehart et~al.(2021)Rhinehart, Wang, Berseth, Co-Reyes, Hafner, Finn, and Levine]{rhinehart2021information}
Nicholas Rhinehart, Jenny Wang, Glen Berseth, John Co-Reyes, Danijar Hafner, Chelsea Finn, and Sergey Levine.
\newblock Information is power: Intrinsic control via information capture.
\newblock \emph{Advances in Neural Information Processing Systems}, 34:\penalty0 10745--10758, 2021.

\bibitem[Russell \& Wefald(1989)Russell and Wefald]{russell1989optimal}
Stuart Russell and Eric Wefald.
\newblock On optimal game-tree search using rational meta-reasoning.
\newblock In \emph{IJCAI}, pp.\  334--340. Citeseer, 1989.

\bibitem[Russell \& Wefald(1991)Russell and Wefald]{russell1991principles}
Stuart Russell and Eric Wefald.
\newblock Principles of metareasoning.
\newblock \emph{Artificial intelligence}, 49\penalty0 (1-3):\penalty0 361--395, 1991.

\bibitem[Ryzhov \& Powell(2011)Ryzhov and Powell]{ryzhov2011value}
Ilya~O Ryzhov and Warren~B Powell.
\newblock The value of information in multi-armed bandits with exponentially distributed rewards.
\newblock \emph{Procedia Computer Science}, 4:\penalty0 1363--1372, 2011.

\bibitem[Salge et~al.(2014{\natexlab{a}})Salge, Glackin, and Polani]{salge2014changing}
Christoph Salge, Cornelius Glackin, and Daniel Polani.
\newblock Changing the environment based on empowerment as intrinsic motivation.
\newblock \emph{Entropy}, 16\penalty0 (5):\penalty0 2789--2819, 2014{\natexlab{a}}.

\bibitem[Salge et~al.(2014{\natexlab{b}})Salge, Glackin, and Polani]{salge2014empowerment}
Christoph Salge, Cornelius Glackin, and Daniel Polani.
\newblock Empowerment--an introduction.
\newblock \emph{Guided Self-Organization: Inception}, pp.\  67--114, 2014{\natexlab{b}}.

\bibitem[Schmidhuber(1991)]{schmidhuber1991possibility}
J{\"u}rgen Schmidhuber.
\newblock A possibility for implementing curiosity and boredom in model-building neural controllers.
\newblock In \emph{Proc. of the International Conference on Simulation of Adaptive Behavior: From Animals to Animats}, pp.\  222--227, 1991.

\bibitem[Schulman et~al.(2017)Schulman, Wolski, Dhariwal, Radford, and Klimov]{schulman2017proximal}
John Schulman, Filip Wolski, Prafulla Dhariwal, Alec Radford, and Oleg Klimov.
\newblock Proximal policy optimization algorithms.
\newblock \emph{arXiv preprint arXiv:1707.06347}, 2017.

\bibitem[Sekar et~al.(2020)Sekar, Rybkin, Daniilidis, Abbeel, Hafner, and Pathak]{sekar2020planning}
Ramanan Sekar, Oleh Rybkin, Kostas Daniilidis, Pieter Abbeel, Danijar Hafner, and Deepak Pathak.
\newblock Planning to explore via self-supervised world models.
\newblock In \emph{International Conference on Machine Learning}, pp.\  8583--8592. PMLR, 2020.

\bibitem[Sharma et~al.(2019)Sharma, Gu, Levine, Kumar, and Hausman]{sharma2019dynamics}
Archit Sharma, Shixiang Gu, Sergey Levine, Vikash Kumar, and Karol Hausman.
\newblock Dynamics-aware unsupervised discovery of skills.
\newblock \emph{arXiv preprint arXiv:1907.01657}, 2019.

\bibitem[Shyam et~al.(2019)Shyam, Ja{\'s}kowski, and Gomez]{shyam2019model}
Pranav Shyam, Wojciech Ja{\'s}kowski, and Faustino Gomez.
\newblock Model-based active exploration.
\newblock In \emph{International Conference on Machine Learning}, pp.\  5779--5788. PMLR, 2019.

\bibitem[Simon(1956)]{simon1956dynamic}
Herbert~A Simon.
\newblock Dynamic programming under uncertainty with a quadratic criterion function.
\newblock \emph{Econometrica}, pp.\  74--81, 1956.

\bibitem[Singh et~al.(2009)Singh, Lewis, and Barto]{singh2009rewards}
Satinder Singh, Richard~L Lewis, and Andrew~G Barto.
\newblock Where do rewards come from.
\newblock In \emph{Proceedings of the annual conference of the cognitive science society}, pp.\  2601--2606. Cognitive Science Society, 2009.

\bibitem[Sorg et~al.(2012)Sorg, Singh, and Lewis]{sorg2012variance}
Jonathan Sorg, Satinder Singh, and Richard~L Lewis.
\newblock Variance-based rewards for approximate {B}ayesian reinforcement learning.
\newblock \emph{arXiv preprint arXiv:1203.3518}, 2012.

\bibitem[Sowerby et~al.(2022)Sowerby, Zhou, and Littman]{sowerby2022designing}
Henry Sowerby, Zhiyuan Zhou, and Michael~L Littman.
\newblock Designing rewards for fast learning.
\newblock \emph{arXiv preprint arXiv:2205.15400}, 2022.

\bibitem[Sutton et~al.(1999)Sutton, McAllester, Singh, and Mansour]{sutton1999policy}
Richard~S Sutton, David McAllester, Satinder Singh, and Yishay Mansour.
\newblock Policy gradient methods for reinforcement learning with function approximation.
\newblock \emph{Advances in Neural Information Processing Systems}, 12, 1999.

\bibitem[Szepesv{\'a}ri(2010)]{szepesvari2010algorithms}
Csaba Szepesv{\'a}ri.
\newblock Algorithms for reinforcement learning.
\newblock \emph{Synthesis Lectures on Artificial Intelligence and Machine Learning}, 4\penalty0 (1):\penalty0 1--103, 2010.

\bibitem[Taiga et~al.(2021)Taiga, Fedus, Machado, Courville, and Bellemare]{taiga2021bonus}
Adrien~Ali Taiga, William Fedus, Marlos~C Machado, Aaron Courville, and Marc~G Bellemare.
\newblock On bonus-based exploration methods in the arcade learning environment.
\newblock \emph{arXiv preprint arXiv:2109.11052}, 2021.

\bibitem[Thrun \& Pratt(1998)Thrun and Pratt]{thrun1998learning}
Sebastian Thrun and Lorien Pratt.
\newblock Learning to learn: Introduction and overview.
\newblock In \emph{Learning to Learn}, pp.\  3--17. Springer, 1998.

\bibitem[Wang et~al.(2016)Wang, Kurth-Nelson, Tirumala, Soyer, Leibo, Munos, Blundell, Kumaran, and Botvinick]{wang2016learning}
Jane~X Wang, Zeb Kurth-Nelson, Dhruva Tirumala, Hubert Soyer, Joel~Z Leibo, Remi Munos, Charles Blundell, Dharshan Kumaran, and Matt Botvinick.
\newblock Learning to reinforcement learn.
\newblock \emph{arXiv preprint arXiv:1611.05763}, 2016.

\bibitem[Warde-Farley et~al.(2018)Warde-Farley, Van~de Wiele, Kulkarni, Ionescu, Hansen, and Mnih]{warde2018unsupervised}
David Warde-Farley, Tom Van~de Wiele, Tejas Kulkarni, Catalin Ionescu, Steven Hansen, and Volodymyr Mnih.
\newblock Unsupervised control through non-parametric discriminative rewards.
\newblock \emph{arXiv preprint arXiv:1811.11359}, 2018.

\bibitem[Watkins(1989)]{watkins1989learning}
Christopher John Cornish~Hellaby Watkins.
\newblock Learning from delayed rewards.
\newblock \emph{PhD thesis, Cambridge University, Cambridge, England}, 1989.

\bibitem[Zhang et~al.(2021)Zhang, Wang, Hu, Chen, Chen, Fan, and Zhang]{zhang2021metacure}
Jin Zhang, Jianhao Wang, Hao Hu, Tong Chen, Yingfeng Chen, Changjie Fan, and Chongjie Zhang.
\newblock Metacure: Meta reinforcement learning with empowerment-driven exploration.
\newblock In \emph{International Conference on Machine Learning}, pp.\  12600--12610. PMLR, 2021.

\bibitem[Zheng et~al.(2018)Zheng, Oh, and Singh]{zheng2018learning}
Zeyu Zheng, Junhyuk Oh, and Satinder Singh.
\newblock On learning intrinsic rewards for policy gradient methods.
\newblock \emph{Advances in Neural Information Processing Systems}, 31, 2018.

\bibitem[Zintgraf et~al.(2019)Zintgraf, Shiarlis, Igl, Schulze, Gal, Hofmann, and Whiteson]{zintgraf2019varibad}
Luisa Zintgraf, Kyriacos Shiarlis, Maximilian Igl, Sebastian Schulze, Yarin Gal, Katja Hofmann, and Shimon Whiteson.
\newblock {VARIBAD}: A very good method for {B}ayes-adaptive deep {RL} via meta-learning.
\newblock \emph{arXiv preprint arXiv:1910.08348}, 2019.

\bibitem[Zou et~al.(2019)Zou, Ren, Yan, Su, and Zhu]{zou2019reward}
Haosheng Zou, Tongzheng Ren, Dong Yan, Hang Su, and Jun Zhu.
\newblock Reward shaping via meta-learning.
\newblock \emph{arXiv preprint arXiv:1901.09330}, 2019.

\bibitem[Zou et~al.(2021)Zou, Ren, Yan, Su, and Zhu]{zou2021learning}
Haosheng Zou, Tongzheng Ren, Dong Yan, Hang Su, and Jun Zhu.
\newblock Learning task-distribution reward shaping with meta-learning.
\newblock \emph{Proceedings of the AAAI Conference on Artificial Intelligence}, 35\penalty0 (12):\penalty0 11210--11218, 2021.
\newblock \doi{10.1609/aaai.v35i12.17337}.

\end{thebibliography}
\bibliographystyle{iclr2025_conference}

\appendix
\section{Potential-Based Shaping Proofs}

\subsection{Main Theorem}\label{subsection:pbsproofs}
\fta*
\begin{proof}(Sufficiency)
Denote the original BAMDP $\Me=(\Sbar,\mathcal{A},\Rbar,\Tbar,\Tbar_0,\gamma)$ with optimal algorithm $\alg^*$, and the shaped BAMDP as $\Me'=(\Sbar,\mathcal{A},\Rbar',\Tbar,\Tbar_0,\gamma)$, i.e., it is identical to $\Me$ except for its shaped reward function $\Rbar'(\se_t,a,\se_{t+1})=\Rbar(\se_t,a)+\Fshape(\se_{t+1})=\Rbar(\se_t,a)+\gamma\phi(h_{t+1})-\phi(h_t)$. So we model the pseudo-rewards as being added to the RL algorithm's reward signal internally, rather than entering through the history in the BAMDP state $\se_t$. We denote the optimal algorithm for $\Me'$ by $\alg^{*\prime}$.

Abbreviating $\phi(h_t)$ to $\phi_t$, we can now express the expected return of an algorithm in $\Me'$ in terms of the underlying return it would achieve in $\Me$:
\begin{dmath}\label{eqn:pbsf_proof}
    \mathbb{E}_\alg[\bar{G'}]= \mathbb{E}_{\Tbar,\Tbar_0,\alg}\left[\sum_t\gamma^t\Rbar'\left( \se_t,a_t,\se_{t+1}\right)\right] 
            = {\mathbb{E}_{\Tbar,\Tbar_0,\alg}\left[\sum_t\gamma^t\left(\Rbar\left( \se_t,a_t\right) +\gamma\phi_{t+1}-\phi_t\right)\right]}
            = {\mathbb{E}_{\Tbar,\Tbar_0,\alg}\left[\sum_{t=0}^\infty\gamma^t\Rbar(\se_t,a_t)\right] + \mathbb{E}_{\Tbar,\Tbar_0,\alg}\left[-\phi_0 + \sum_{t=1}^\infty \gamma^t(\phi_t-\phi_t)\right] }
            = {\mathbb{E}_{\alg}[\bar{G}] -\phi_0}
\end{dmath}
Plugging this into the definitions of the Bayes-optimal algorithms for $\Me'$ and $\Me$:
\begin{dmath}
    \alg^{*\prime}\in \argmax_\alg \mathbb{E}_{\alg}[\bar{G}']
        = \argmax_\alg {\mathbb{E}_{\alg}[\bar{G}] -\phi_0} 
        = { \argmax_\alg {\mathbb{E}_{\alg}[\bar{G}]}  = \alg^*}
\end{dmath}
\end{proof}

We now prove that if pseudo-reward function $\Fshape(h)$ is not a BAMPF, then there exists BAMDP reward and transition functions such that no Bayes-optimal algorithm for the shaped BAMDP is Bayes-optimal in the original BAMDP. 
\begin{proof}(Necessity)
Our proof follows \cite{ng1999policy}'s proof of necessity, but is more complex because the BAMDP state contains the entire history. Assume $F$ is not a BAMPF. We need to show that we can construct $\bar{T},\bar{R}$ such that no optimal algorithm in shaped BAMDP $\bar{M}'$ is also optimal in $\bar{M}$. Define $\phi(h)=-\sum_{t=0}^\infty\gamma^tF(h(a0s_a)_{t+1})$ for all $h$ with $s_a \in \mathcal{S}, a \in \mathcal{A}$, where $a0s_a$ denotes the trajectory segment from taking action $a$, receiving reward $0$ and transitioning to state $s_a$, $(a0s_a)_{t+1}$ denotes the trajectory formed by repeating this segment $t+1$ times, and $h(a0s_a)_{t+1}$ is the history formed by concatenating $h$ with $(a0s_a)_{t+1}$. By assumption of $F$ not being a BAMPF, there exists a valid transition between two BAMDP states $\langle s_1,h_1\rangle,\langle s_2,h_1a'r's_2\rangle \in \Sbar$ via some action $a' \in \mathcal{A}$ yielding reward $r'$ such that $\gamma\phi(h_1a'r's_2)-\phi(h_1)\neq F(h_1a'r's_2)$. Define $\Delta = F(h_1a'r's_2)-\gamma\phi(h_1a'r's_2)+\phi(h_1)$. We now construct $\Me$ with no uncertainty over $R,T$, i.e., $\pdist$ is concentrated on a single MDP $M$, illustrated in Fig.~\ref{fig:necessity}. 
    In this $M$, we have $T(s_a|s_1,a)=T(s_2|s_1,a')=1.0$, and from $s_2$ and $s_a$ the only applicable action $a$ leads to $s_a$ with probability 1. The initial MDP state is $s_1$, so for $s_1,h_1$ to be realizable in this environment $h_1$ is the length 1 sequence $s_1$. Let $R(s_1,a)=r'+\Delta/2, R(s_1,a')=r', R(\cdot,\cdot)=0$ elsewhere. Then we have
    \begin{figure}
    \centering
    \includegraphics[width=0.4\textwidth]{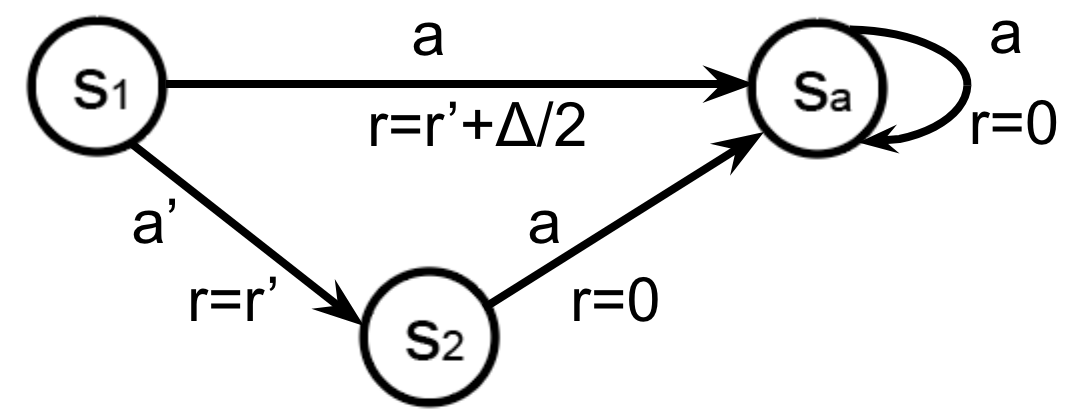}
    \caption{All edges have probability 1, and only action $a$ is applicable from $s_2$ and $s_a$.}
    \label{fig:necessity}
\end{figure}
    
\begin{align*} 
    \eQ^*(\langle s_1,h_1\rangle,a)&=r'+\Delta/2 \\
    \eQ^*(\langle s_1,h_1\rangle,a')&=r' \\
    \eQ^{*\prime}(\langle s_1,h_1\rangle,a)&=r'+\Delta/2+\sum_{t=0}^\infty\gamma^tF(h_1 (a0s_a)_{t+1}) \\
                    &= r'+\Delta/2 - \phi(h_1)\\
                    & = r'+\Delta/2 + F(h_1a'r's_2)-\gamma\phi(h_1a'r's_2)-\Delta\\
                    &=r'+F(h_1a'r's_2)-\gamma\phi(h_1a'r's_2)-\Delta/2\\
    \eQ^{*\prime}(\langle s_1,h_1\rangle,a')&=r'+F(h_1a'r's_2)+\gamma\sum_{t=0}^\infty\gamma^tF(h_1a'r's_2 (a0s_a)_{t+1}) \\
    &=r'+F(h_1a'r's_2)-\gamma\phi(h_1a'r's_2).
\end{align*}
Hence 
\begin{equation}
 \alg^*(\langle s_1,h_1\rangle)=
    \begin{cases}
      a & \text{if $\Delta/2 >0$}\\
      a' & \text{otherwise}
    \end{cases}       
\qquad
 \alg^{*\prime}(\langle s_1,h_1\rangle)=
    \begin{cases}
      a' & \text{if $\Delta/2 >0$}\\
      a & \text{otherwise}
    \end{cases}       
\end{equation}
\end{proof}

\subsection{Near-Optimal Algorithms are Nearly Invariant to BAMPFs}\label{appendix:approx_opt}
We now extend our results to approximately optimal algorithms, first providing an overview of the results and proof sketches before diving into the full proofs.

\subsubsection{Approximate Optimality Corollaries}
\begin{restatable}{corollary}{eps}
\label{thm:epsilon}
     With BAMPF shaping, a near-optimal algorithm $\alg^\epsilon$ for $\Me'$ will also be near-optimal when applied to $\Me$, i.e.,
     \begin{equation}
         \mathbb{E}_{\alg^{*\prime}}[\bar{G'}]-\mathbb{E}_\alg^\epsilon[\bar{G'}] < \epsilon \iff \mathbb{E}_{\alg^*}[\bar{G}]-\mathbb{E}_\alg^\epsilon[\bar{G}] < \epsilon.
     \end{equation}
\end{restatable}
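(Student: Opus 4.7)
The plan is to show that BAMPF shaping changes every algorithm's expected return by the same additive constant, and then leverage Theorem~\ref{thm:maintheorem} to identify optimal values across the two BAMDPs. Concretely, from the sufficiency half of the main theorem, the telescoping identity
$$
\sum_{t=0}^\infty \gamma^t F(h_{t+1}) \;=\; \sum_{t=0}^\infty \gamma^t\bigl(\gamma\phi(h_{t+1})-\phi(h_t)\bigr) \;=\; -\phi(h_0) + \lim_{T\to\infty}\gamma^T\phi(h_T)
$$
collapses (under the usual boundedness assumption on $\phi$ used in the sufficiency proof) to $-\phi(h_0)$. So for every algorithm $\bar\pi$,
$$
\mathbb{E}_{\bar\pi}[\bar G'] \;=\; \mathbb{E}_{\bar\pi}[\bar G] \;-\; \mathbb{E}_{\bar T_0}[\phi(h_0)],
$$
where the subtracted quantity is a constant $C$ depending only on $\bar T_0$ and $\phi$, not on $\bar\pi$.

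Next I would use Theorem~\ref{thm:maintheorem} to identify the optimal values. Sufficiency says that any algorithm optimal for $\bar M'$ is also optimal for $\bar M$, and vice versa, so $\mathbb{E}_{\bar\pi^{*\prime}}[\bar G] = \mathbb{E}_{\bar\pi^*}[\bar G]$ and (by the additive-constant identity) $\mathbb{E}_{\bar\pi^{*\prime}}[\bar G'] = \mathbb{E}_{\bar\pi^*}[\bar G]-C$. Subtracting the additive-constant identity for $\bar\pi$ from that for $\bar\pi^{*\prime}$ gives
$$
\mathbb{E}_{\bar\pi^{*\prime}}[\bar G'] - \mathbb{E}_{\bar\pi}[\bar G'] \;=\; \mathbb{E}_{\bar\pi^{*\prime}}[\bar G] - \mathbb{E}_{\bar\pi}[\bar G] \;=\; \mathbb{E}_{\bar\pi^*}[\bar G] - \mathbb{E}_{\bar\pi}[\bar G],
$$
so the two suboptimality gaps are literally equal, and the $\epsilon$-equivalence follows at once (the iff is actually an identity of the two sides).

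The only subtlety I expect is the tail term $\lim_{T\to\infty}\gamma^T\phi(h_T)$: the corollary inherits whatever integrability/boundedness hypothesis on $\phi$ was used in the sufficiency direction of Theorem~\ref{thm:maintheorem}, so I would simply invoke that hypothesis rather than redoing the analysis. A second minor point is that $\bar\pi^{*\prime}$ need not be unique, but since any maximizer for $\bar G'$ is also a maximizer for $\bar G$ by the main theorem, the value $\mathbb{E}_{\bar\pi^{*\prime}}[\bar G]$ is well-defined and equals $\mathbb{E}_{\bar\pi^*}[\bar G]$. With those two observations noted, the proof is essentially a one-line consequence of the additive-constant lemma implicit in the sufficiency argument.
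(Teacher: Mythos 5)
Your proposal is correct and follows essentially the same route as the paper: both rest on the telescoping identity $\mathbb{E}_{\alg}[\bar{G}'] = \mathbb{E}_{\alg}[\bar{G}] - \phi_0$ (Equation~\ref{eqn:pbsf_proof}) to show the suboptimality gaps in $\Me'$ and $\Me$ coincide, then invoke Theorem~\ref{thm:maintheorem} to replace $\mathbb{E}_{\alg^{*\prime}}[\bar{G}]$ with $\mathbb{E}_{\alg^*}[\bar{G}]$. Your added care about the tail term $\gamma^T\phi(h_T)$ and the expectation over $\Tbar_0$ in the constant is a slight refinement of, but not a departure from, the paper's one-line argument.
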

\begin{proofsketch}
    Because BAMPF shaping shifts BAMDP returns by a constant, lower shaped return $\mathbb{E}_\alg[\bar{G'}]$ corresponds directly to $\mathbb{E}_\alg[\bar{G}]$ decreasing by the same amount. See \ref{proof:eps} for the full proof.
\end{proofsketch}

Another form of approximate optimality is the maximization of return over a finite horizon $k$. We can also upper-bound the regret as a function of $k$.

\begin{definition}[k-step optimal]
We define an RL algorithm as k-step optimal for a BAMDP if it maximizes the expected k-step return:
\begin{equation}
    \alg_k^{*} \in \argmax_\alg \mathbb{E}_{\Tbar_0,\Tbar}\left[\sum\nolimits_{t=0}^k\gamma^t\Rbar(\se_t,\alg(\se_t))\right]. %= \argmax_\alg \mathbb{E}_\alg[\bar{G}_k]  
\end{equation}
\end{definition}

\begin{restatable}{corollary}{ks}\label{thm:kstep}
If $\Fshape$ is a BAMPF with potential function of maximum magnitude $\phi_{\max}$, and the extrinsic reward has maximum magnitude $R_{\max}$, then the k-step optimal algorithm for the shaped BAMDP, $\alg_k^{*\prime}$, has regret bounded by $2\gamma^k(\phi_{\max}+R_{\max}\frac{\gamma}{1-\gamma})$ in the underlying BAMDP, i.e.,
\begin{equation}
    \mathbb{E}_{\alg^*}[\bar{G}]-\mathbb{E}_{\alg_k^{*\prime}}[\bar{G}] \leq 2\gamma^k\left(\phi_{\max}+R_{\max}\frac{\gamma}{1-\gamma}\right).
\end{equation}
\end{restatable}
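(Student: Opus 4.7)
The plan is to combine two ingredients: (i) the telescoping identity that made BAMPFs collapse to a constant shift in the infinite-horizon case of Theorem \ref{thm:maintheorem}, which here leaves a controlled boundary term because we stop summing at step $k$; and (ii) a standard sandwich of the regret around $\alg_k^{*\prime}$'s truncated objective, using Theorem \ref{thm:maintheorem} to ensure $\alg^*$ is simultaneously optimal in $\Me$ and $\Me'$.

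First I would set up the telescoping on any realized trajectory: $\sum_{t=0}^k \gamma^t F(h_{t+1}) = \sum_{t=0}^k[\gamma^{t+1}\phi(h_{t+1}) - \gamma^t\phi(h_t)] = \gamma^{k+1}\phi(h_{k+1}) - \phi(h_0)$, so $\bar{G}'_{:k} = \bar{G}_{:k} + \gamma^{k+1}\phi(h_{k+1}) - \phi(h_0)$, and in the infinite-horizon limit (which exists because $\gamma<1$ and $\phi$ is bounded) $\bar{G}' = \bar{G} - \phi(h_0)$. Subtracting and using $|\phi|\le\phi_{\max}$ together with the geometric tail $|\bar{G}-\bar{G}_{:k}|\le \gamma^{k+1}R_{\max}/(1-\gamma)$ yields the uniform truncation bound
\begin{equation*}
|\bar{G}' - \bar{G}'_{:k}| \le \gamma^{k+1}\phi_{\max} + \gamma^{k+1}R_{\max}/(1-\gamma) \le \gamma^k\Bigl(\phi_{\max} + R_{\max}\tfrac{\gamma}{1-\gamma}\Bigr).
\end{equation*}

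Next I would translate the unshaped regret into a shaped regret: since $\phi(h_0)$ depends only on $s_0\sim\Tbar_0$, it is algorithm-independent, so $\mathbb{E}_{\alg^*}[\bar{G}] - \mathbb{E}_{\alg_k^{*\prime}}[\bar{G}] = \mathbb{E}_{\alg^*}[\bar{G}'] - \mathbb{E}_{\alg_k^{*\prime}}[\bar{G}']$, and Theorem \ref{thm:maintheorem} gives that $\alg^*$ is also optimal in $\Me'$. Inserting the truncated shaped return to decompose
\begin{equation*}
\mathbb{E}_{\alg^*}[\bar{G}'] - \mathbb{E}_{\alg_k^{*\prime}}[\bar{G}'] = \mathbb{E}_{\alg^*}[\bar{G}'-\bar{G}'_{:k}] + \underbrace{\mathbb{E}_{\alg^*}[\bar{G}'_{:k}] - \mathbb{E}_{\alg_k^{*\prime}}[\bar{G}'_{:k}]}_{\le\,0} + \mathbb{E}_{\alg_k^{*\prime}}[\bar{G}'_{:k}-\bar{G}'],
\end{equation*}
where the middle term is non-positive by $k$-step optimality of $\alg_k^{*\prime}$, and each outer term is bounded in absolute value by the estimate above, gives the claimed $2\gamma^k(\phi_{\max}+R_{\max}\gamma/(1-\gamma))$.

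The main obstacle is bookkeeping: $\alg_k^{*\prime}$ is $k$-step optimal for $\Me'$, but neither for $\Me$ nor for the infinite-horizon shaped return, so neither $\alg^*$ nor $\alg_k^{*\prime}$ dominates the other under any single objective, and one has to stage the comparison through exactly the right intermediate quantity. Routing the regret through the shaped infinite-horizon return and then inserting the truncated shaped return as the sandwich middle is the delicate move, and it works \emph{only} because the BAMPF structure keeps the telescoping residual $\gamma^{k+1}\phi(h_{k+1})-\phi(h_0)$ uniformly bounded in $k$; for a generic pseudo-reward this boundary term could grow with $k$ and defeat the truncation-error estimate.
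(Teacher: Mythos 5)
Your proof is correct and rests on the same two estimates as the paper's: the BAMPF telescoping over a finite horizon leaving a boundary term of magnitude at most $\gamma^{k+1}\phi_{\max}$, and the geometric tail bound $\gamma^{k+1}R_{\max}/(1-\gamma)$ on the truncated return. The organization differs, however. The paper stages the comparison through the $k$-step optimal algorithm $\alg_k^*$ for the \emph{unshaped} BAMDP: it first proves a standalone lemma that $\mathbb{E}_{\alg_k^*}[\bar{G}_k]-\mathbb{E}_{\alg_k^{*\prime}}[\bar{G}_k]\leq 2\gamma^k\phi_{\max}$, and then separately sandwiches the infinite-horizon returns of $\alg^*$ and $\alg_k^{*\prime}$ around their $k$-step returns using the $R_{\max}$ tail. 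You never introduce $\alg_k^*$; instead you route the regret through the infinite-horizon \emph{shaped} return and insert the truncated shaped return $\bar{G}'_{:k}$ as the middle of a three-term sandwich, whose middle term is non-positive precisely because $\alg_k^{*\prime}$ maximizes $\mathbb{E}[\bar{G}'_{:k}]$ by definition. This yields a single self-contained argument with no auxiliary lemma, and in fact produces the marginally sharper constant $2\gamma^{k+1}\bigl(\phi_{\max}+R_{\max}/(1-\gamma)\bigr)$ before you relax it to match the statement. Two minor remarks: your appeal to Theorem \ref{thm:maintheorem} is not actually needed, since the identity $\mathbb{E}_{\alg}[\bar{G}']=\mathbb{E}_{\alg}[\bar{G}]-\phi_0$ (which you derive directly from the telescoping) is all the first step uses; and your indexing of the residual as $\gamma^{k+1}\phi(h_{k+1})$ is the consistent one for a sum over $t=0,\dots,k$, whereas the paper writes it as $\gamma^k\phi_k$ --- an immaterial off-by-one in the exposition that does not affect either bound.
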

\begin{proofsketch}
    The telescoping $\phi$ summed over horizon $k$ leaves a trailing term of $\gamma^k\phi_k$ from the last step, which allows us to bound $\alg_k^{*\prime}$'s k-step regret compared to $\alg_k^*$ in terms of $\phi_\max$. The regret of $\alg_k^*$ compared to the fully Bayes-optimal algorithm $\alg^*$ is bounded by the worst-case regret after step $k$, giving us a bound in terms of $R_\max$. See \ref{proof:kstep} for the full proof. 
\end{proofsketch}

\subsubsection{Full Proofs}\label{appendix:approx_meta_proofs}

The result in Equation~\ref{eqn:pbsf_proof} tells us that a near-optimal algorithm for $\Me'$ will also be near-optimal in $\Me$ because the return is just shifted by a constant.

\eps*
\begin{proof}\label{proof:eps}
\begin{dmath}
    \mathbb{E}_{\alg^{*\prime}}[\bar{G'}]-\mathbb{E}_{\alg}[\bar{G'}] = \mathbb{E}_{\alg^{*\prime}}[\bar{G}]-\phi_0-(\mathbb{E}_{\alg}[\bar{G}]-\phi_0) = \mathbb{E}_{\alg^{*\prime}}[\bar{G}] - \mathbb{E}_{\alg}[\bar{G}] =  \mathbb{E}_{\alg^*}[\bar{G}]-\mathbb{E}_{\alg}[\bar{G}]
\end{dmath}   
\end{proof}

Now to prove Corollary~\ref{thm:kstep} we first introduce the following Lemma:
\begin{lemma}\label{lemma:kstep}
If $\Fshape$ is a BAMPF with potential function of maximum magnitude $\phi_{\max}$, then the k-step optimal algorithm for the shaped BAMDP, $\alg_k^{*\prime}$, has k-step regret bounded by $2\gamma^k\phi_{\max}$ in the true BAMDP:
\begin{equation}
    \max_{s,h} |\phi(h)| = \phi_{\max} \implies \mathbb{E}_{\alg_k^*}[\bar{G}_k] - \mathbb{E}_{\alg_k^{*\prime}}[\bar{G}_k] \leq 2\gamma^k\phi_{\max}
\end{equation}
\end{lemma}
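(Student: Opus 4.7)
The plan is to show that the shaped $k$-step return differs from the unshaped $k$-step return only by the constant $-\phi(h_0)$ plus a trailing boundary term involving $\phi(h_{k+1})$, and then use the optimality of $\alg_k^{*\prime}$ with respect to the shaped return to turn that boundary term into a bound on the regret.

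First, I would substitute the BAMPF form $\Fshape(h_{t+1}) = \gamma\phi(h_{t+1})-\phi(h_t)$ into $\Rbar'(\se_t,a_t,\se_{t+1}) = \Rbar(\se_t,a_t) + \Fshape(h_{t+1})$ and telescope to obtain
\[
\sum_{t=0}^{k}\gamma^t\Rbar'(\se_t,a_t,\se_{t+1}) = \sum_{t=0}^{k}\gamma^t\Rbar(\se_t,a_t) + \gamma^{k+1}\phi(h_{k+1}) - \phi(h_0).
\]
Unlike the infinite-horizon telescoping used in the sufficiency direction of Theorem~\ref{thm:maintheorem}, this finite sum does not collapse to a constant: truncating at step $k$ leaves the residual $\gamma^{k+1}\phi(h_{k+1})$. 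Taking expectations under any algorithm $\alg$ gives $\mathbb{E}_{\alg}[\bar G'_k] = \mathbb{E}_{\alg}[\bar G_k] + \gamma^{k+1}\mathbb{E}_{\alg}[\phi(h_{k+1})] - \phi(h_0)$.

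Next I would invoke the defining property of $\alg_k^{*\prime}$ as the maximizer of $\mathbb{E}_{\alg}[\bar G'_k]$, so that $\mathbb{E}_{\alg_k^{*\prime}}[\bar G'_k] \ge \mathbb{E}_{\alg_k^{*}}[\bar G'_k]$. Substituting the identity above and cancelling the common $\phi(h_0)$ terms, rearranging yields
\[
\mathbb{E}_{\alg_k^{*}}[\bar G_k] - \mathbb{E}_{\alg_k^{*\prime}}[\bar G_k] \le \gamma^{k+1}\bigl(\mathbb{E}_{\alg_k^{*\prime}}[\phi(h_{k+1})] - \mathbb{E}_{\alg_k^{*}}[\phi(h_{k+1})]\bigr).
\]
Since $|\phi(h)| \le \phi_{\max}$ for every realizable $h$, the right-hand side is at most $2\gamma^{k+1}\phi_{\max} \le 2\gamma^k\phi_{\max}$, which gives the claimed bound.

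The only real obstacle relative to the infinite-horizon case is handling that trailing term $\gamma^{k+1}\phi(h_{k+1})$: it is the sole source of regret, so the bound must be controlled entirely by the uniform bound $\phi_{\max}$, and the proof breaks without such an assumption. Everything else is bookkeeping that essentially mirrors the sufficiency argument of Theorem~\ref{thm:maintheorem}, restricted to a finite horizon.
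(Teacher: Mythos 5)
Your proof is correct and follows essentially the same route as the paper's: telescope the finite-horizon shaped return to isolate a trailing boundary term, invoke the optimality of $\alg_k^{*\prime}$ for the shaped objective, and bound the residual by $\phi_{\max}$ twice. Your bookkeeping of the trailing term as $\gamma^{k+1}\phi(h_{k+1})$ is in fact the more careful indexing for a sum running to $t=k$ (the paper writes $\gamma^k\phi_k$), and it yields the marginally tighter constant $2\gamma^{k+1}\phi_{\max}$, which of course implies the stated bound.
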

\begin{proof}
    First, observe that when PBSF rewards are summed over a finite horizon, all but the potentials of the first and last timesteps cancel out: 
    \begin{dmath}
        \sum_{t=0}^k\gamma^t(\gamma\phi_{t+1}-\phi_t)
        =-\phi_0+\sum_{t=1}^{k-1}\gamma^t(\phi_t-\phi_t)+\gamma^k\phi_k
        =\gamma^k\phi_k-\phi_0
    \end{dmath}
    
    Thus, following the same steps as Equation~\ref{eqn:pbsf_proof} we can express the k-step return of an algorithm in the shaped BAMDP as:
    \begin{equation}\label{eqn:kstepreturn}
        \mathbb{E}_\alg[\bar{G'}_k]=\mathbb{E}_{\alg}[\bar{G}_k]+\mathbb{E}_{\Tbar,\alg}[\gamma^k\phi_k]-\phi_0.
    \end{equation}
    And so the k-step optimal algorithm in the shaped BAMDP can be expressed as:
    \begin{equation}
        \alg_k^{*\prime}=\argmax_\alg  \mathbb{E}_\alg[\bar{G'}_k] = \argmax_\alg \mathbb{E}_\alg[\bar{G}_k]+\mathbb{E}_{\Tbar,\alg}[\gamma^k\phi_k]
    \end{equation}
    Evaluating this expression at $\alg_k^{*\prime}$ and applying the bound on the potential function, we get:
    \begin{dmath}
        \mathbb{E}_{\alg_k^{*\prime}}[\bar{G}_k]+\mathbb{E}_{\Tbar,\alg_k^{*\prime}}[\gamma^k\phi_k] 
        = \max_\alg \mathbb{E}_{\alg}[\bar{G}_k]+\mathbb{E}_{\Tbar,\alg}[\gamma^k\phi_k]
        \geq \max_\alg \mathbb{E}_{\alg}[\bar{G}_k] - \gamma^k\phi_{\max}
        =\mathbb{E}_{\alg_k^*}[\bar{G}_k] - \gamma^k\phi_{\max}
    \end{dmath}
    Rearranging the above inequality and applying the bound once more we get:
    \begin{dmath}
    \mathbb{E}_{\alg_k^*}[\bar{G}_k] -\mathbb{E}_{\alg_k^{*\prime}}[\bar{G}_k] \leq \gamma^k\phi_{\max} + 
       \mathbb{E}_{\Tbar,\alg_k^{*\prime}}[\gamma^k\phi_k]
        \leq 2\gamma^k\phi_{\max}.
    \end{dmath}
\end{proof}

\ks*
\begin{proof}\label{proof:kstep}
    Since magnitude of the reward is bounded by $R_{\max}$, we can upper bound the expected return of $\alg^*$ by the optimal k-step return $\mathbb{E}_{\alg_k^*}[\bar{G}_k]$ plus the maximum return from step $k+1$ onwards:
     \begin{dmath}\label{eqn:kstep_partial}
         \mathbb{E}_{\alg^*}[\bar{G}]\leq \mathbb{E}_{\alg_k^*}[\bar{G}_k]+\gamma^{k+1}\frac{1}{1-\gamma}R_{\max} 
                                     \leq \mathbb{E}_{\alg_k^{*\prime}}[\bar{G}_k]+2\gamma^k\phi_{\max}+\gamma^{k+1}\frac{1}{1-\gamma}R_{\max},
     \end{dmath}
    where we applied Lemma~\ref{lemma:kstep} to substitute $\mathbb{E}_{\alg_k^*}[\bar{G}_k]$ at the second line.
    Meanwhile, the full expected return of $\alg_k^{*\prime}$ can be lower bounded by its k-step return plus the minimum return from step $k+1$ onwards:
    \begin{dmath}
        \mathbb{E}_{\alg_k^{*\prime}}[\bar{G}] \geq \mathbb{E}_{\alg_k^{*\prime}}[\bar{G}_k] - \gamma^{k+1}\frac{1}{1-\gamma}R_{\max}
                                                    \geq \mathbb{E}_{\alg^*}[\bar{G}]-2\gamma^k\phi_{\max}-2\gamma^{k+1}\frac{1}{1-\gamma}R_{\max},
    \end{dmath}
    where we substituted the $\mathbb{E}_{\alg_k^{*\prime}}[\bar{G}_k]$ in the first line using the result in Equation~\ref{eqn:kstep_partial}.
\end{proof}

\begin{remark}
    For RL algorithms that have a minimum resolution at which they distinguish returns, BAMPF shaping ceases to affect their behavior once they are optimal over a long enough horizon. More precisely,
    we call an RL Algorithm that does not distinguish returns less than $d$ apart $d$-insensitive. For k high enough that $2\gamma^k\phi_{\max}<d$, all k-step optimal $d$-insensitive algorithms in a BAMPF-shaped BAMDP are behaviorally equivalent to their counterparts in the original BAMDP.
\end{remark}

\subsection{Bounded Monotone Increasing Potentials Preserve MDP Optimality}\label{appendix:settlingproof} 

\settlingbampf*
\begin{proof}
We first introduce a lemma to bound the maximum increase in the potential function:
\begin{lemma}\label{lemma:settling}
If $\phi$ is bounded and monotone, then for any RL algorithm $\bar{\pi}$ and finite $\delta > 0$ we have some point in time $H$ at which $\phi$ has changed by $\delta$ or more for the last time, and thus all future values of $\phi(h_t)$ must be within $\delta$ of each other, i.e.:
\begin{equation}
    \forall \bar{\pi}, \delta > 0, \Delta t \geq 0 \quad \exists H :   |\phi(h_{t+\Delta t})-\phi(h_t)| < \delta \quad \forall t > H.
\end{equation}
\end{lemma}
\begin{proof}

Because the sequence of $\phi(h_t)$ is bounded and monotone, the Monotone Convergence Theorem tells us that it is convergent. For all convergent sequences $S_t$,
\begin{equation}
    \forall \delta >0 \quad \exists H : |S_{\infty}-S_t|<\delta \quad \forall t > H,
\end{equation}

where $S_\infty$ is the infimum or supremum of the sequence. Finally, because the sequence is monotone we have $|S_{t+\Delta t}-S_t | \leq |S_\infty-S_t|$, therefore $|S_{t+\Delta t}-S_t | < \delta$.
\end{proof}

We will use this lemma to bound the artificial advantage any policy output by the RL algorithm can get due to shaping, and thus show that any policy chosen to maximize shaped return is still near-optimal for the real return.

We proceed by expressing the expected return of policy $\pi_t$ output by $\bar{\pi}$ and shaped by the BAMPF at training step $t$, $\mathbb{E}_{\pi_t}[G'_t]$ (i.e., the expected discounted sum of both pseudo-rewards and extrinsic rewards that $\pi_t$ achieves in one episode; $G'_t$ also varies with $t$ because the BAMPF varies over time), in terms of the unshaped return $\mathbb{E}_{\pi_t}[G]$ (i.e., the discounted sum of extrinsic rewards that the same policy $\pi_t$ achieves in one episode) and the episode length $N$:
\begin{dmath}
     \mathbb{E}_{\pi_t}[G'_t] = {\mathbb{E}_{\pi_t}[G +\gamma\phi(h_{t+1})-\phi(h_t)+\gamma^2\phi(h_{t+2})-\gamma\phi(h_{t+1}) ... + \gamma^N\phi(h_{t+N})]}
     = {\mathbb{E}_{\pi_t}[G] - \phi(h_t) + \gamma^N\mathbb{E}_{\pi_t}[\phi(h_{t+N})]}
\end{dmath}

We use this to express the difference in shaped return between MDP policy $\pi_t$ that was output by the RL algorithm at step $t$, and any other MDP policy $\pi$, if we rolled them out at training step $t$, in terms of the difference in their unshaped returns:
\begin{dmath}\label{eq:differences}
     \forall \pi: \mathbb{E}_{\pi_t}[G'_t]-\mathbb{E}_{\pi}[G'_t]
     = {\mathbb{E}_{\pi_t}[G] - \phi(h_t) + \gamma^N\mathbb{E}_{\pi_t}[\phi(h_{t+N})]-\mathbb{E}_{\pi}[G] + \phi(h_t) - \gamma^N\mathbb{E}_{\pi}[\phi(h_{t+N})]}
     = {\mathbb{E}_{\pi_t}[G]-\mathbb{E}_{\pi}[G] + \gamma^N(\mathbb{E}_{\pi_t}[\phi(h_{t+N})] - \mathbb{E}_{\pi}[\phi(h_{t+N})])}.
\end{dmath}
If one side of Equation \ref{eq:differences} is nonnegative, the other side must also be nonnegative, thus:
\begin{equation}\label{bicond}
    \mathbb{E}_{\pi_t}[G'_t]\geq \mathbb{E}_{\pi}[G'_t] \iff \mathbb{E}_{\pi_t}[G] \geq \mathbb{E}_{\pi}[G] - \gamma^N(\mathbb{E}_{\pi_t}[\phi(h_{t+N})] - \mathbb{E}_{\pi}[\phi(h_{t+N})]) 
\end{equation}

Because $\pi_t$ was output by the RL algorithm, $\phi(h_{t+N})$ is part of its sequence of $\phi$ that has already converged. So Lemma \ref{lemma:settling} gives us $\mathbb{E}_{\pi_t}[\phi(h_{t+N})] < \phi(h_t)+\epsilon$. Meanwhile, because $\phi$ is a monotone increasing function of time, we have $\mathbb{E}_{\pi}[\phi(h_{t+N})] \geq \phi(h_t)$ for all $\pi$.
Applying these inequalities to the lower bound in \ref{bicond}, we can bound how much worse $\pi_t$ can be than any other policy $\pi$ if $\pi_t$ achieves higher shaped return:  
\begin{dmath}
    {\mathbb{E}_{\pi_t}[G'_t] \geq \mathbb{E}_{\pi}[G'_t] \iff \mathbb{E}_{\pi_t}[G]} \geq {\mathbb{E}_{\pi}[G] - \gamma^N(\mathbb{E}_{\pi_t}[\phi(h_{t+N})] - \mathbb{E}_{\pi}[\phi(h_{t+N})])}\\
    {> \mathbb{E}_{\pi}[G] - \gamma^N(\phi(h_t)+\epsilon - \phi(h_t))}\\
    {\geq \mathbb{E}_{\pi}[G] - \epsilon},
\end{dmath}

where at the final line we used the fact that $\gamma\leq1$. Thus, if the RL algorithm arrives at $\pi^{*'}_t$ that maximizes return shaped by a bounded monotone increasing BAMPF at time $t>H$, then $\pi^{*'}_t$ must also approximately maximize return in the underlying MDP, i.e., 
\begin{equation}
    \forall \pi: \mathbb{E}_{\pi^{*'}_t}[G'_t] \geq \mathbb{E}_{\pi}[G'_t] \implies \forall \pi: \mathbb{E}_{\pi^{*'}_t}[G] > \mathbb{E}_{\pi}[G] - \epsilon.
\end{equation}
We finally plug in the actual optimal policy for the original MDP $\pi^*$ in the right hand side to get our approximate-optimality-preserving result:
\begin{equation}
    \mathbb{E}_{\pi^{*'}_t}[G] > \mathbb{E}_{\pi^*}[G]-\epsilon.
\end{equation}

\end{proof}

\begin{remark}
    This proof relies on all episodes being the same length. To handle MDPs with variable episode lengths, when an episode ends early all remaining BAMPF rewards that would have been added from the rest of the episode should be added at the terminating step. See Appendix~\ref{appendix:mountaincar} for an example.
\end{remark}

\section{Examples of Pseudo-reward Value Signalling}\label{appendix:rxegs}

\subsection{Pure VOO Signal}
These pseudo-rewards help when $\VoO^*$ has a large influence on $\eQ^{*}$ but the RL algorithm's (implicit or explicit) value estimate is misaligned with it. This often happens when there is significant prior information of the relative values of reaching states in $M$, which the algorithm is not initialized with, thus the pseudo-reward is often quite task or domain-specific. E.g., RL agents may underestimate value near the goal before first discovering the goal reward, or overestimate it near the icy cliff edge before first experiencing a fall. Attractive or repulsive $\VoO^*$ signal can correct RL algorithms that under or overestimate VOO, respectively. 

\subsubsection{Attractive VOO Signal}
These terms help where the RL agent \textit{underestimates} the value of getting to certain states, by rewarding it for reaching them.
A common example is \textit{goal proximity}-based reward shaping \citep{ng1999policy,ghosh2018learning,lee2021generalizable,ma2022vip}; which rewards each step of progress towards a goal in problems where the true reward is only at the goal itself. The goal location varies across MDPs in $p(M)$ but is fully observable from the initial state, therefore going towards it yields no information and thus does not increase $\VoI^*$, but is Bayes-optimal because it maximizes $\VoO^*$. An RL algorithm that is not initialized with this prior information would not prioritize going towards the goal over any other state, and would have to discover the sparse reward there through blind trial and error. Shaping rewards compensate for this, incentivizing behavior that approaches the goal.

Another common example with the same underlying mechanism is rewarding points scored in points-based-victory games like Pong. But if winning is not purely points-based, this is not necessarily good signal for $\VoO^*$; e.g., \citet{Clark_Amodei} found an agent learned to crash itself to maximize points, when the true goal was to place first in the race.

\subsubsection{Repulsive VOO Signal}
Psuedo-rewards based on repulsive $\VoO^*$ signal help when the agent would take suboptimal actions because it overestimates the Value of Opportunity, again due to missing domain knowledge, by penalizing behavior going to states with lower $\VoO^*$. Examples are \textit{negative surprise} or \textit{information-cost}-based IM~\citep{berseth2019smirl,eysenbach2021robust} which give negative rewards based on the unpredictability of the states and transitions experienced. This is beneficial, assuming:
\begin{enumerate}
    \item Unpredictable situations are undesirable in that domain, e.g. for driverless cars where it is dangerous to drive near other erratic vehicles, or robotic surgery, where it is dangerous to use unreliable surgical techniques with highly variable outcomes. 
    \item The RL algorithm does not apriori expect danger in unpredictable states and thus overestimates the $\VoO$ of exploring them; it could learn to avoid them by getting into accidents and receiving negative task rewards, but that would be very costly in the real world. 
\end{enumerate}
These rewards help RL algorithms by decreasing their estimates of the value of going to these dangerous states, better aligning them with their true value $\eQ^{*}$, so they return to safety \textit{before} getting into accidents. 

More formally, negative surprise works under the assumption that \textit{on the distribution of trajectories the agent actually experiences}, surprise almost always correlates well with negative outcomes. An example where this assumption wouldn't hold is if Times Square were a popular and safe destination for the driverless taxi, but the unpredictability of all the adverts were included in the surprise penalty. This measure of surprise correlates poorly with negative outcomes in trajectories through Times Square, so it would increase regret by making the agent unnecessarily reroute around it. In this problem, the signal for $\VoO^*$ must be more specific--only penalizing surprise with respect to things that could cause accidents, such as the positions of other vehicles.

\subsection{Attractive VOI Signal}
Many intrinsic motivation terms are intended to reward behavior that gains valuable experience, reaching states with more valuable information in $h_t$. These are helpful for RL algorithms which don't account for the utility of gathering information (e.g., see section \ref{section:1step}), in problems containing information of significant value that is not immediately rewarding to collect. Note that different types of information are valuable in different types of problems, i.e., for different $\pdist$, and this determines which form of IM provides the most helpful signal for $\VoI^*$.

\begin{itemize}
    \item \textit{Prediction Error}-based IM \citep{schmidhuber1991possibility,pathak2017curiosity} rewards experiences that are predicted poorly by models trained on $h_t$. This helps when unpredictability given $h_t$ is good signal for the value of the information gained from the observation- thus, for IM based on dynamics models there must be \textit{minimal stochasticity} (stochastic transitions are always unpredictable but yield no information) and in general \textit{most information must be task-relevant} (so the information gained has value). 
    \item \textit{Count-based} IM \citep{bellemare2016unifying,burda2018exploration,lobel2023flipping} provide a reward bonus for visiting states that have been visited less frequently, or in the case of large continuous state spaces pseudo-counts are used to group similar states into the same bucket. For example, \citet{burda2018exploration} observed dynamics prediction error failing in the `noisy TV' problem, where a TV that changes channels randomly maximizes the intrinsic reward despite providing no information. This motivated their design of \textit{RND}, which only predicts features of the current state as an estimate for how many times that state, or similar states, had already been visited. However, these types of IM can be counterproductive when the novelty of a state is a poor signal for how valuable it is to explore it, e.g. an ``infinite TV problem" where the TV has infinite unique channels that provide useless information.
    \item \textit{Entropy bonus} IM is proportional to the entropy of the MDP policy's action distribution \citep{szepesvari2010algorithms,mnih2016asynchronous,haarnoja2017reinforcement}. This increases the estimated return of more stochastic $\pi$, so it can be understood as adding in the value of exploring a wider range of actions. This helps when RL algorithms get stuck in local maxima, but breaks down if the scale of the intrinsic rewards is too high, because overly random behavior is unlikely to reach interesting states, or could even be dangerous, e.g., if the increase in entropy when the robot overextends its joints outweighed the negative extrinsic rewards from the damage that does, it could encourage the robot to destroy itself. Therefore it must be carefully balanced with the scale and frequency of the extrinsic rewards \citep{hafner2023mastering}.
    \item \textit{Mutual Information-based skill discovery} \citep{sharma2019dynamics,warde2018unsupervised,eysenbach2018diversity} rewards the agent based on the mutual information between the skills (temporally correlated sequences of actions) it learns and the resulting states. The higher this mutual information, the more diverse and controllable the skill set. It's a good signal for $\VoI^*$ in RL problems where this is a good measure for how useful the set of skills is for maximizing return, which also depends on the choice of representation used for the skills and states.
    \item \textit{Information Gain} is a measure of the amount of information gained about the environment \citep{lindley1956measure}. Info gain-based IM has led to successful exploration in RL \citep{sekar2020planning, houthooft2016vime,shyam2019model}; it is a good signal for $\VoI^*$ in problems where all information about the MDP is useful for maximizing return. But it could be a distraction in environments with many irrelevant dynamics to learn about, since the quantity of information gained would not always align with the value of that information. 
\end{itemize}
    
\subsection{Composite Signal}
Finally, we analyze more complex pseudo-rewards that signal a combination of both $\VoI^*$ and $\VoO^*$. 

\subsubsection{Attractive VOI, Repulsive VOO}
The \textit{Empowerment} of an agent is typically measured as the mutual information $I(s';a|s)$ between its actions and their effect on the environment \citep{klyubin2005empowerment,salge2014changing,gregor2016variational}. In prior work, this was generally understood as motivating agents to move to the states of ``maximum influence" \citep{salge2014empowerment,mohamed2015variational}, e.g., the center of the room, or the junction of intersecting hallways. However, this does not always explain the full story. \citet{mohamed2015variational} found that in problems with predators chasing or lava flowing toward the agent, Empowerment motivates it to barricade itself from the lava or avoid the predators- even when this requires holing up in a tiny corner of the room.
We can understand this by decomposing Empowerment into the sum of attractive $\VoI$ and repulsive $\VoO$ signals:
\begin{equation}
    I(s';a|s)=H(a|s)-H(a|s,s'),
\end{equation}
where we can view $H(a|s)$ as adding attractive signal for $\VoI^*$, similar to Entropy Regularization, encouraging the exploration of different actions such as the barricade-placing action. Meanwhile $-H(a|s,s')$ adds repulsive signal for $\VoO^*$, similar to Negative Surprise, signalling that states where the agent is caught or engulfed (which results in the agent's actions ceasing to have any predictable effect on the environment) have low value, and thus should be avoided. Empowerment intrinsic motivation has mostly been tested in small finite environments, but this decomposition suggests its potential for lifelong learning in open-ended worlds, where it can encourage the exploration of a wide range of possibilities while staying out of danger.

Similar to Empowerment is Information Capture \citep{rhinehart2021information}, where the intrinsic reward is based on the expected entropy of the belief distribution (encouraging exploration) minus the entropy of the latent state visitation distribution (which is low when the agent has "stabilized" the environment by controlling it and creating a niche for itself).

\subsubsection{Attractive VOI and Attractive VOO}
Another example of a composite value signal is pseudo-rewards based on the principle of \textit{optimism} in the face of uncertainty, e.g. \cite{sorg2012variance,qian2019exploration}. Rather than just rewarding novel experiences, they aim to reward actions where there is a high upper confidence bound on the possible value they could lead to. Thus, these types of pseudo-rewards encourage gaining information about the value of states and actions (signalling VOI), and/or reaching more valuable states (signalling VOO). 

Reward bonuses for completing necessary subtasks for the first time, e.g. the first time successfully chopping wood as used in Crafter \citep{hafner2021benchmarking}, is one more example of a composite value signal. It adds both the value of discovering how to complete the subtask (because more wood will be needed) and the value of being one step closer to the ultimate objective of mining the diamond (one less woodblock needed to build a pick-axe). This adds the prior knowledge that in all worlds under $p(M)$, wood is a prerequisite for diamonds, and that it is valuable to learn how to gather new types of resources.

\section{Certainty-Equivalent RL Algorithms Underestimate the Value of Information}\label{section:1step}
Section \ref{subsection:voivoo} showed how we can understand many IM terms as encouraging and directing exploration through encoding the value of the resulting information. This type of IM is so popular because many commonly used RL algorithms under-estimate the value of information; we now introduce a model of these algorithms to formalize this within the BAMDP framework. Many RL algorithms, from policy-based methods like policy gradient \citep{sutton1999policy,schulman2017proximal} to value-based methods like Q-Learning \citep{watkins1989learning,mnih2015human}, avoid the intractability of belief-space planning by acting as if beliefs are fixed. 
We formalize this objective with the \Greedy{}\footnote{The idea of \textit{certainty equivalent} solutions goes back to \citet{simon1956dynamic} and was also described by \citet{duff2002optimal} as the \textit{best reactive policy} with respect to $\alg^\gsup$'s beliefs} 
RL algorithm $\alg^\gsup$.

\begin{definition}[\Greedy{} RL Algorithm]
    At each step, the \Greedy{} Algorithm $\alg^\gsup$ follows the MDP policy maximizing its estimated expected return under current beliefs:\footnote{Note that this is focused on exploitation- other than IM, ad-hoc methods such as epsilon-greedy or Boltzmann exploration may also be used to encourage exploration~\citep{kaelbling1996reinforcement}.}
\begin{equation}\label{greedy_lifelong}
         \alg^\gsup(\langle s_t,h_t \rangle ) \in \argmax\nolimits_\pi \mathbb{E}_{b^\gsup(h_t)}[V^\pi(s_t)],
\end{equation}
where $b^\gsup(\cdot)$ denotes how $\alg^\gsup$ interprets its experience, which could be anything from a distribution over world models maintained by updating a prior, to a point estimate of $Q^*$ maintained by training a randomly initialized neural net on batches sampled from $h_t$ \citep{mnih2015human}. For ease of notation we use $\pi_t$ and $a_t$ interchangeably, since $\pi_t$ is only used at step $t$ to output $a_t$. 
\end{definition} 
For example, policy gradient algorithms like Reinforce sample actions from an MDP policy $\pi_\theta$, i.e.,
$
    \alg^{\text{Reinforce}}(\langle s_t,h_t \rangle ) = \pi_\theta(s_t),
$
where $\pi_\theta$ is learnt by gradient updates towards maximizing the expected returns $\mathcal{R}(\tau)$ of the trajectories $\tau$ that it generates, i.e. $J(\theta)=E_{\tau\sim\pi_\theta}[\mathcal{R}(\tau)]$. The algorithm estimates $J(\theta)$ from environment interactions so far, i.e.\ $h_t$,
so $\hat{J}(\theta)=\mathbb{E}_{b^\gsup(h_t)}[E_{\tau\sim\pi_\theta}[\mathcal{R}(\tau)]]$ 
where $b^\gsup(h_t)$ is concentrated on a point estimate of $J(\theta)$. If, as a model, we assume that policy gradient were to maximize $J(\theta)$ between each interaction, then we find that it matches the behavior of $\alg^\gsup$:
\begin{equation}
    \argmax\nolimits_{\theta}\hat{J}(\theta)=\argmax\nolimits_\theta \mathbb{E}_{b^\gsup(h_t)}[E_{\tau\sim\pi_\theta}[\mathcal{R}(\tau)]]=\argmax\nolimits_\pi \mathbb{E}_{b^\gsup(h_t)}[V^\pi(s)].
\end{equation}

The \Greedy{} algorithm effectively estimates the Q value as:
\begin{equation}
    \qhat(\se_t,a) = \max\nolimits_\pi \mathbb{E}_{b^\gsup(h_t)}[R(s_t,a)+\gamma V^\pi(s_{t+1})].
\end{equation}
Comparing this to the optimal value $ \eQ^*(\se_t,a) = \mathbb{E}_{\podist}[R(s_t,a)+\gamma \eV^*(\langle s_{t+1},h_{t+1}\rangle)]$, we see that, even assuming accurate beliefs $b^c(h_t)=\podist$, the \Greedy{} algorithm still misses value from the ability to learn from the information in $h$, because with $V^\pi(s_{t+1})$ it assumes it would still follow the same $\pi$ from step $t+1$, when in reality it would have updated it with the latest observation. E.g., if $s_b$ in Fig.~\ref{fig:transgraph} was observed to be empty, $\alg^\gsup$ would update $\pi$ to return to $s_w$ forever, resulting in higher return than if it kept following the same $\pi$ back to $s_b$ (see section~\ref{appendix:greedy_mining_calc} for the calculations). Thus, adding pseudo-rewards signalling VOI can compensate for this underestimation, causing the algorithm to predict higher value for behavior leading to valuable information.

\section{Curiosity Case Study}\label{appendix:curosity}

\subsection{Background}
The Curiosity intrinsic reward introduced by \citet{pathak2017curiosity} is equal to the following error:
\begin{equation}\label{eqn:curiosity}
    F(h_t) = \frac{\eta}{2}||\hat{e}(s_{t+1};h_t)-e(s_{t+1};h_t)||^2_2,
\end{equation}
where $\hat{e}(s_{t+1};h_t)$ is the output of a forward dynamics model, and $e(s_{t+1};h_t)$ is a state encoder. 

The encoder extracts features of the state that can be influenced by the agent, by minimizing the error of a learnt inverse dynamics model $i$ that predicts the action from the encoded state transition:
\begin{equation}
\hat{a}_t=i(e(s_t;h_t),e(s_{t+1};h_t);h_t).
\end{equation}

The forward dynamics model $d$ predicts the encoded next state given the current action and encoded state:
\begin{equation}
    \hat{e}(s_{t+1};h_t)=d(e(s_t;h_t),a_t;h_t),
\end{equation}
and is continually trained to minimize the error of its prediction (as calculated in Equation~\ref{eqn:curiosity}).
Curiosity uses this error to estimate the novelty of (and hence the added VOI from observing) states and transitions, assuming the error will decrease for the same observations as $d$ is updated. The Noisy TV problem occurs when the environment contains a screen that flips to random images every time a particular action is taken: watching TV doesn't increase VOI, but $d$'s error is highest and never decreases while watching. Thus, an RL algorithm trained with Curiosity is incentivized to produce a reward-hacking policy that maximizes the total shaped return by watching TV.

\citet{burda2018exploration} mitigate this issue with Random Network Distillation (RND) by predicting features of the current state rather than the next state. This would not get distracted by noisy transitions such as the TV flipping randomly between a few preset images, but it could get trapped forever in front of a TV that incremented through an infinite set of novel images. It could also neglect parts of the environment with \textit{novel dynamics}, if the transitions are between less novel-looking states.

Curiosity's encoder and dynamics models are all trained on the history, so it could be converted to a BAMDP potential-based shaping function by directly using the error as the potential function $\bar{\phi}(h_t) = ||\hat{e}(s_{t+1};h_t)-e(s_{t+1};h_t)||^2_2$. Though this would be fine for meta-RL, it wouldn't preserve optimality for RL because it is not monotonic. E.g., in the Noisy TV problem the prediction error always increases when watching TV and decreases upon looking away, because the prediction error of the totally random next image on the TV is higher than the error from exploring the rest of the environment.

\subsection{Potential-Based Shaping Function + Bounded Monotone BAMPF Conversion}
We could try to capture the same type of dynamics prediction-based IM of Curiosity while preserving optimality, by splitting it into a potential-based shaping function plus a bounded monotone BAMPF. 
The PBSF can capture the part that is non-monotone: the novelty of the current state, while the BAMPF encourages observing novel dynamics that improve the dynamics model, by using its overall accuracy as the potential. 

\textit{Rewarding Novel States:} We could use a PBSF with $\phi(s)$ equal to the error of a \textit{fixed model} $\hat{f}$'s prediction of fixed features $f$ of $s$ (similar to RND), encouraging visiting unfamiliar states:
\begin{equation}
    \phi(s_t) = ||\hat{f}(s_t)-f(s_t)||^2_2.
\end{equation}
Even if this error were higher at the TV, \citet{ng1999policy}'s result guarantees that a policy can't maximize its total shaped return by staying there.

\textit{Rewarding Novel Dynamics:} we 
could signal the total VOI at $h_t$ (see Definition~\ref{def:voi}) with the reduction in prediction error of the forward dynamics model compared to its initial error at $h_0$, which we call \textit{Accuracy} for brevity. Setting $\phi(h_t)$ to this would encourage the agent to seek novel transitions to improve its predictions. We calculate the errors over a \textit{fixed set of transitions} $\mathcal{T}$, so that Accuracy is bounded and should generally be expected to monotonically increase:
\begin{equation}
     Acc(h_t):=\frac{1}{|\mathcal{T}|}\sum\nolimits_{(s,a,s')\in \mathcal{T}}||\hat{e}(s';h_0)-e(s';h_0)||^2_2-||\hat{e}(s';h_t)-e(s';h_t)||^2_2
\end{equation}

To guarantee that it monotonically increases, we can set $\bar{\phi}(h_0)=Acc(h_0)$ and make subsequent $\bar{\phi}(h_t)$ the exponentially smoothed maximum accuracy so far:

\begin{equation}\label{eqn:smoothphi}
    \bar{\phi}(h_{t+1}) = 0.5\bar{\phi}(h_t)+0.5\max\left(\bar{\phi}(h_t),Acc(h_{t+1})\right).
\end{equation}

Even if TV-watching transitions were in set $\mathcal{T}$, $\bar{\phi}(h_h)$ could not increase indefinitely by any significant amount while watching TV, so the agent would not be incentivized to watch TV forever. The total intrinsic reward would simply be the sum of the PBSF and BAMPF:

\begin{equation}
    \gamma(\phi(s_{t+1})+\bar{\phi}(h_{t+1}))-(\phi(s_t)+\bar{\phi}(h_t)).
\end{equation}

To prevent stagnation, every few batches we could update $\hat{f}$ (e.g., by training it to minimize its error over states visited in the new batches), and we could refresh $\mathcal{T}$ (e.g., with transitions that the dynamics model predicted the most poorly in the new batches). What matters for avoiding reward hacking is that the RL agent's choice of policy $\pi$ for the next episode can't affect $\hat{f}$ \textit{within the episode} (so $\phi(s)$ remains a function of only the current MDP state), and that $\mathcal{T}$ is fixed for enough episodes for $\bar{\phi}(h)$ to converge.

\subsection{Numerical Demonstration}
We concretely demonstrate the conversion of Curiosity~\citep{pathak2017curiosity} to an optimality-preserving PBSF + bounded monotone increasing BAMPF with a simple numerical experiment. We recreate the Noisy TV problem with the MDP shown in Fig.~\ref{fig:noisyTVMDP}, with three available actions: \textit{left}, \textit{right}, and \textit{watch TV}. 
The left and right actions deterministically transport the agent into the adjacent state in that direction (looping back to $S0$ and $S7$ when no such state exists). The \textit{watch TV} action has no effect in all states except $S1$ and $S2$, where it randomly transports the agent to either $S1$ or $S2$ with equal probability. Thus, randomly flipping between $S1$ and $S2$ corresponds to watching a Noisy TV. The initial state is $S0$ and the only reward is obtained at $S7$. 
\begin{figure}[t!]
    \centering
    \includegraphics[width=0.9\linewidth]{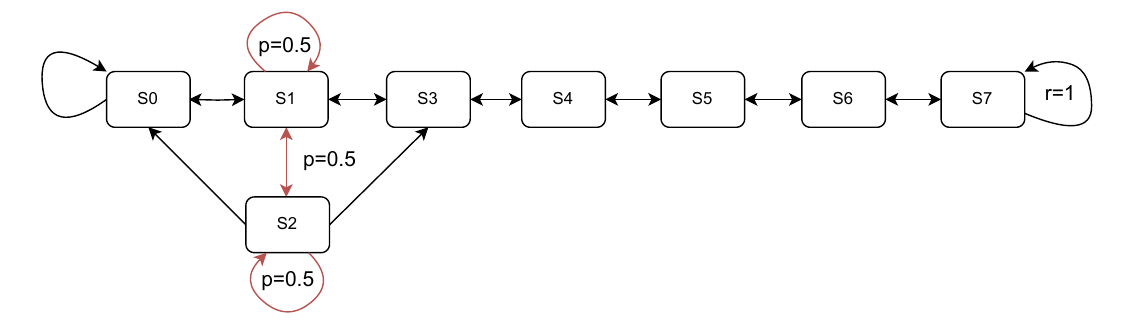}
    \caption{A discrete MDP recreating the Noisy TV Problem. Transitions are deterministic and rewards are 0 except where labeled explicitly. Taking the \textit{watch TV} action from state $S1$ or $S2$ results in a stochastic transition to either state (red arrows).}
    \label{fig:noisyTVMDP}
\end{figure}

No state feature encoding is needed for this small discrete state space, so we learn a forward dynamics model $d(s_t,a_t;h_t)$ directly on the states based on all the observed transition counts:
\begin{equation}\label{eq:countsd}
    \hat{s}_{t+1;h_t} = d(s_t,a_t;h_t) := \argmax_{s'} N_{h_t}(s,a,s'),
\end{equation}
where $N_{h_t}(s,a,s')$ denotes the number of $(s,a,s')$ transitions observed so far. After observing any transition only once this model will predict it correctly- except for watching TV at states $S1$ or $S2$, where it will have a 50\% chance of being correct no matter how many more observations it makes.

The states are represented as integers between $0$ and $7$, so we calculate the prediction error by taking the absolute distance. Thus, our Curiosity intrinsic reward is:
\begin{equation}
    F(h_t) = \frac{\eta}{2}| \hat{s}_{t+1;h_t} - s_{t+1} |.
\end{equation}

Training a tabular $\epsilon$-greedy Q-learning agent (episode length 8, $\gamma=0.95$), we find that with $\eta\geq4$, the agent reliably converges on a policy that watches TV rather than collecting the rewards at $S7$. Fig.~\ref{fig:tv_returns} (purple curve) shows it achieves almost no real return, while Fig.~\ref{fig:tv_shaped_returns} shows it achieves high shaped return, which must be almost entirely from the Curiosity rewards. Fig.~\ref{fig:tv_qdiff} shows it estimates lower value for going right at state $S3$ (exploring further from $S0$) than left (back to the TV). 

To convert Curiosity into a form that preserves optimality, we split it into a PBSF with $\phi(s)$ equal
to the error of a fixed model’s prediction of features of s (similar to RND~\citep{burda2018exploration}) to encourage visiting more novel states, plus a BAMPF with bounded monotone $\bar{\phi}(h)$ equal to the maximum accuracy of the dynamics model on a fixed set of transitions, to encourage observing more novel dynamics.

For the PBSF component, we again use the identity function as the feature encoding of the state $f(s_t)=s_t$, and for our fixed prediction model we use the following:
\begin{equation}
    \hat{f}(s_t)=
    \begin{cases}
      f(s_t), & \text{if}\ s_t=0 \\
      f(s_t)-1, & \text{if}\ 0 < s_t < 4\\
      f(s_t)-3, & \text{otherwise}
    \end{cases}
  \end{equation}

and again use the absolute distance for the error, so our PBSF potential is $\phi(s_t)=|\hat{f}(s_t)-f(s_t)|$. Here $\hat{f}$ represents a model that is already very ``familiar" with the initial state $S0$, somewhat familiar (but still making an error) with the closest states $S1,S2,S3$, but equally unfamiliar (i.e., making the same large error) with all other states. 

For the bounded monotone BAMPF component, we used fixed set $\mathcal{T}$ of transitions from states $S1$-$S5$, including multiple TV-watching transitions. This represents a possible set of less well-predicted transitions collected in the intermediate stages of exploration. Again skipping the encoding and using absolute distance, we calculate Accuracy with $d$ (the same dynamics model used for Curiosity), as:
\begin{equation}
     Acc(h_t):=\frac{1}{|\mathcal{T}|}\sum\nolimits_{(s,a,s')\in \mathcal{T}}|d(s,a;h_0)-s'|-|d(s,a;h_t)-s'|.
\end{equation}
 We finally set $\bar{\phi}(h)$ to the exponentially smoothed maximum accuracy so far following Equation~\ref{eqn:smoothphi}; Fig.~\ref{fig:tv_accuracy} shows how it evolves over training. We scale the BAMPF by 5 to get it into the same order of magnitude as the extrinsic reward function. 
 
 Fig.~\ref{fig:tv_returns} shows how using this converted form of Curiosity (green curve) helped the Q learning agent discover how to reach the rewards at $S7$ more quickly, without incentivizing a suboptimal policy. Each component used individually (blue and yellow curves) also helps learning and preserves optimality, as expected. Fig.~\ref{fig:tv_qdiff} shows how they increase the Q learning agent's estimate of the value of exploring further (going right) from state $S3$ long before any real rewards have been received, whereas without any pseudo-rewards (grey curve) it doesn't estimate that exploring further has any more value than going back until it actually starts reliably achieving the real rewards at $S7$.

Thus, we have converted the problematic curiosity IM term to a pseudo-reward with the same exploratory benefits, but without the costs of reward-hacking.

\begin{figure*}[t!]
    \centering
    \begin{subfigure}[t]{0.45\textwidth}
        \centering
        \includegraphics[width=\textwidth]{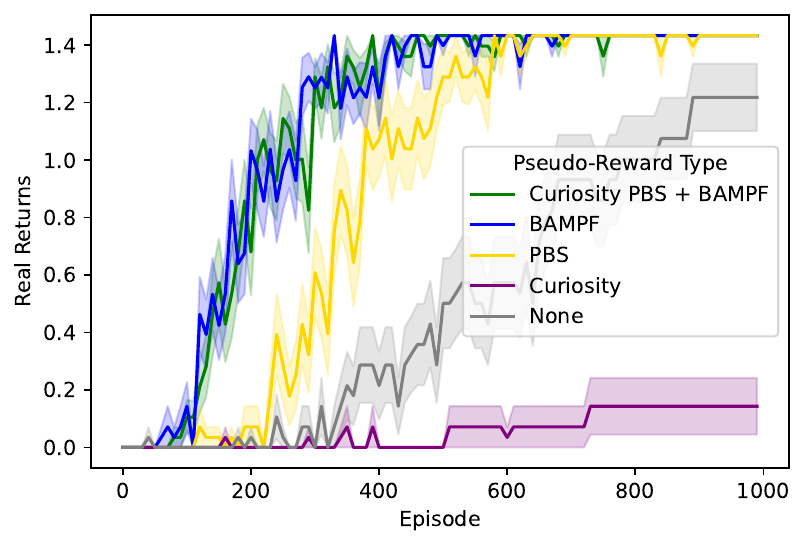}
        \vspace{-15pt}
        \caption{Episodic real return over training.}
        \label{fig:tv_returns}
    \end{subfigure} 
    \begin{subfigure}[t]{0.45\textwidth}
        \centering
    \includegraphics[width=\textwidth]{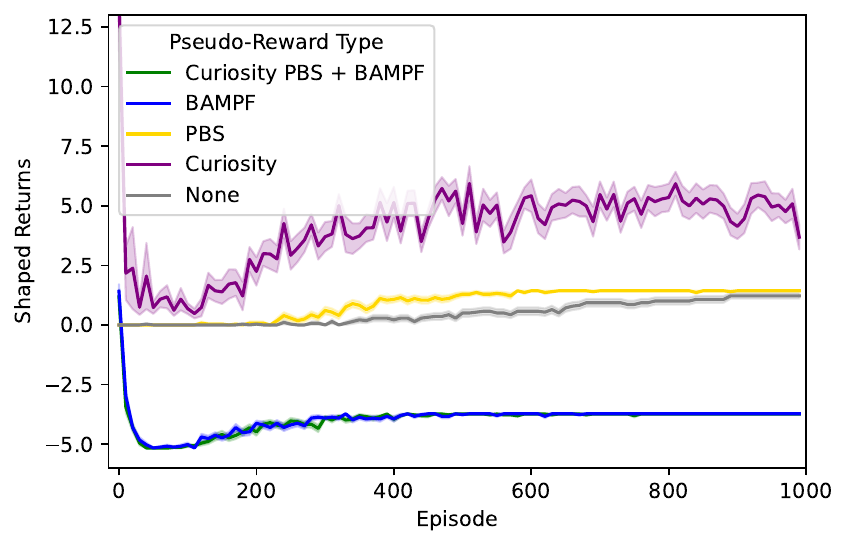}
    \vspace{-15pt}
        \caption{Episodic shaped return.}
        \label{fig:tv_shaped_returns}
    \end{subfigure} 
    \\
    \vspace{5pt}
    \begin{subfigure}[t]{0.45\textwidth}
        \centering        \includegraphics[width=\textwidth]{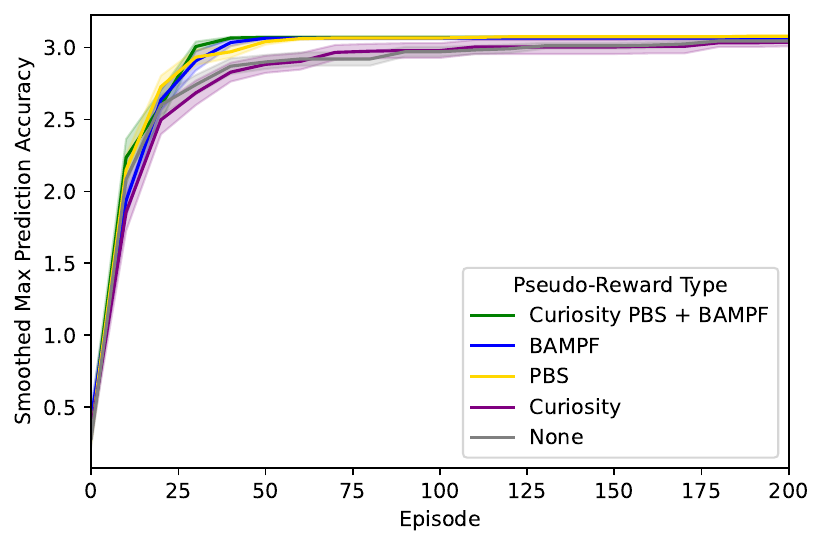}
        \vspace{-15pt}
        \caption{Exponentially smoothed maximum prediction accuracy.}
        \label{fig:tv_accuracy}
    \end{subfigure}
    \begin{subfigure}[t]{0.45\textwidth}
        \centering
        \includegraphics[width=\textwidth]{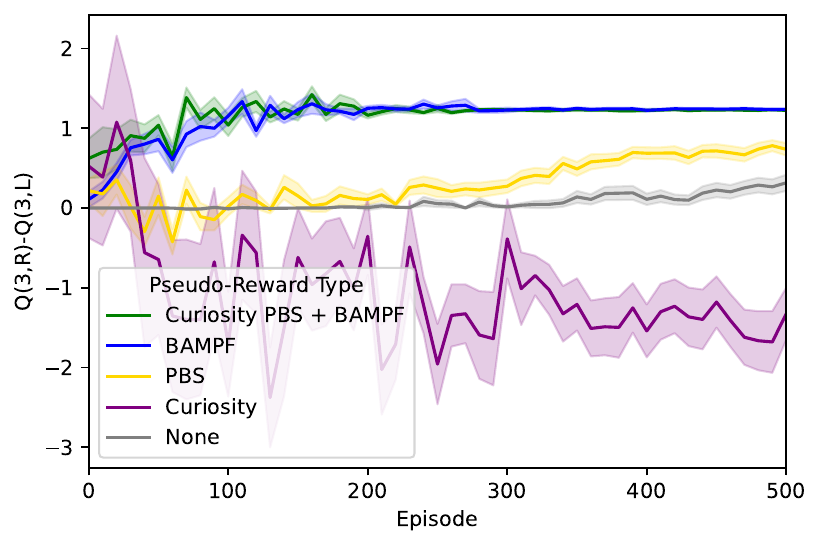}
        \vspace{-15pt}
        \caption{Estimated $Q(S3,right)-Q(S3,left)$.}
        \label{fig:tv_qdiff}
    \end{subfigure}
    \caption{Curiosity intrinsic rewards (purple) cause the agent to learn a reward-hacking policy that maximizes its shaped return by watching TV, at the expense of the true return. Meanwhile, converting Curiosity into the form of a potential-based shaping function plus a bounded monotone BAMPF (green) improves exploration without resulting in a suboptimal policy. Each component individually (yellow and blue) also helps learning and preserves optimality. The mean and standard error over 20 training runs is plotted for each condition.}
    \label{fig:noisyTVcurves}
    \vspace{-5pt}
\end{figure*}

\section{More Related Work}\label{appendix:morerelated}

\textbf{Value of Information} The classical notion of the value of information originates in decision theory \citep{howard1966information}. Early work in metareasoning for tree search considers the utility of the information resulting from a computation~\citep{russell1989optimal,russell1991principles}. \citet{dearden1998bayesian} first applied the concept to RL, computing an approximation to the value of information to help select the Bayes-optimal action. They upper bound the ``myopic value of information" for exploring action $a$ by the expected Value of Perfect Information, i.e. the expected gain in return due to learning the true value of the underlying MDP's $Q^*(s,a)$ given prior beliefs. 
\citet{chalkiadakis2003coordination} described BAMDP Q values as
including the value of information---defined as the value of the change in beliefs quantified by its impact on future decisions---but they do not derive an expression for it. \citet{ryzhov2011value} define the value of information of pulling a bandit arm as the expected resulting increase in the believed mean reward of the best arm, and derive an exact expression for bandits with exponentially distributed rewards.

\textbf{Reward Shaping and Meta-RL} Meta-Learning has been used to learn intrinsic motivation and reward shaping functions for RL \citep{zheng2018learning,zou2019reward,alet2020meta,li2021mural,zou2021learning}, but so far few existing works use reward shaping or intrinsic motivation \textit{to guide the meta-learner itself}~\citep{zhang2021metacure} and the preservation of optimal RL algorithms has not previously been investigated.

\textbf{Learning Awareness} The idea of planning through learning has appeared across RL such as in Bayesian RL and meta-RL~\citep{thrun1998learning,duan2016rl,finn2017model,mikulik2020meta,jackson2024discovering} and multi-agent RL~\citep{foerster2017learning,cooijmans2023meta}.

\section{Lever BAMPF Details}\label{appendix:lever_dqn}
We used the DQN implementation from \citet{allen2021learning}, with the default hyperparameters except \texttt{n_steps_init=20} and \texttt{decay_period=100} due to the shorter time horizon used. This agent uses epsilon-greedy exploration, with $\epsilon=1$ for the first 20 steps, decaying linearly to $0.05$ over the next $100$ steps (according to the schedule $1 - 0.95*\min(0.01(t-20),1.0)$). The \textit{None} condition with no pseudo-rewards relied purely on this epsilon-greedy behavior for exploration. 
In this lifelong setting, the network is updated at every step with a batch of transitions from the full trajectory so far. The value of $\gamma$ used by both DQN and to define the BAMPF was $0.9$. The Entropy Bonus intrinsic reward was calculated as $-10H(A)$ where $H(A)$ is the entropy of the categorical distribution of the last 10 lever pulls.

\section{Bernoulli Bandits Meta-RL details}\label{appendix:metabandits}

We run the Bernoulli Bandits environment and A2C implementation in Gymnax~\citep{gymnax2022github}, keeping all the default hyperparameter values except for changing the number of pulls in an episode to $10$. The discount factor used for the BAMPF is $0.8$, matching the discount factor used by A2C. 

Following \citet{wang2016learning}, we define the regret of an RL algorithm as $\sum_{t=1}^T\mu^*-\mu_{a_t}=\sum_{t=1}^{10}0.9-\mu_{a_t}$, where $\mu^*$ is the expected reward of the optimal arm and $\mu_{a_t}$ is the expected reward of the arm chosen at time $t$, so always pulling the optimal arm yields regret 0, and only pulling it half of the time (performance at random chance) has regret 4.

No rescaling was performed on either the BAMPF or the non-BAMPF pseudo-reward functions, since they were already of the same order of magnitude as the extrinsic rewards.

To handle the finite time horizon, we followed the approach of \cite{grzes2017reward} and set $\phi(h_{10})=0$ at the final (10th) step of each RL algorithm's trajectory, which ensures that the $\phi$ still sum to a constant over the trajectory to preserve optimality, and intuitively represents the fact that the final BAMDP state has no value, since the agent cannot collect any more rewards from it.

\section{Mountain Car Details}\label{appendix:mountaincar}
We run Mountain Car implemented in Gymnax~\citep{gymnax2022github} using the PPO implementation from~\citep{gymnaxblines2022github}, keeping all the default hyperparameter settings. The MountainCar environment itself has discount factor $1$ so the reported return is simply the sum of the rewards in each episode, but the PPO algorithm uses discount factor $\gamma=0.99$ so this was the discount used for both the potential-based shaping and the BAMPF.
To handle the finite episode lengths, we again follow the approach of \cite{grzes2017reward} for the potential-based shaping, setting $\phi(s_t)=0$ when $s_t$ is the last step in an episode, ensuring that it preserves optimality within each episode. For the BAMPF, at the last step in each episode we multiply $\phi(h_t)$ by $\gamma^{200-t_e}$ (where $t_e$ is the within-episode time-step and the maximum episode length is $200$). This corresponds to adding up all the remaining BAMPF rewards that would have been added between that time-step and the final step if the episode hadn't been truncated (where in the remaining steps the history remains frozen at $h_t$ since no new experiences are collected). Our formula for exponential smoothing ($\alpha=0.5$) on the maximum displacement is:
\begin{equation}
    M_t = 0.5\max(M_{t-1},|-0.5-s_{xt}|) + 0.5M_{t-1},
\end{equation}
where $M_t$ denotes the exponentially smoothed maximum displacement at training step $t$, $s_{xt}$ is the x position of the car at time $t$ and $-0.5$ is the x position of the lowest point between the two hills. Note that this is still bounded and monotone increasing after smoothing. Our PPO implementation runs 16 copies of the environment in parallel; we maintain a separate $M_t$ for each copy (whenever an episode ends, $M_t$ is carried over to the next episode in that environment copy) and plotted the mean over the 16 values of $M_t$ in Fig.~\ref{fig:MCmaxpos}.

For each form of pseudo-reward, we scale it by a constant to be on the same order of magnitude as the environment rewards. For the BAMPF and PBS, we scaled it by 10, and for Displacement (non-PBS version) we scaled it by 4. Note that whether Displacement preserves optimality is sensitive to this scaling factor, but the BAMPF version preserves optimality no matter the scale.

\begin{figure}
    \centering
    \includegraphics[width=0.5\linewidth]{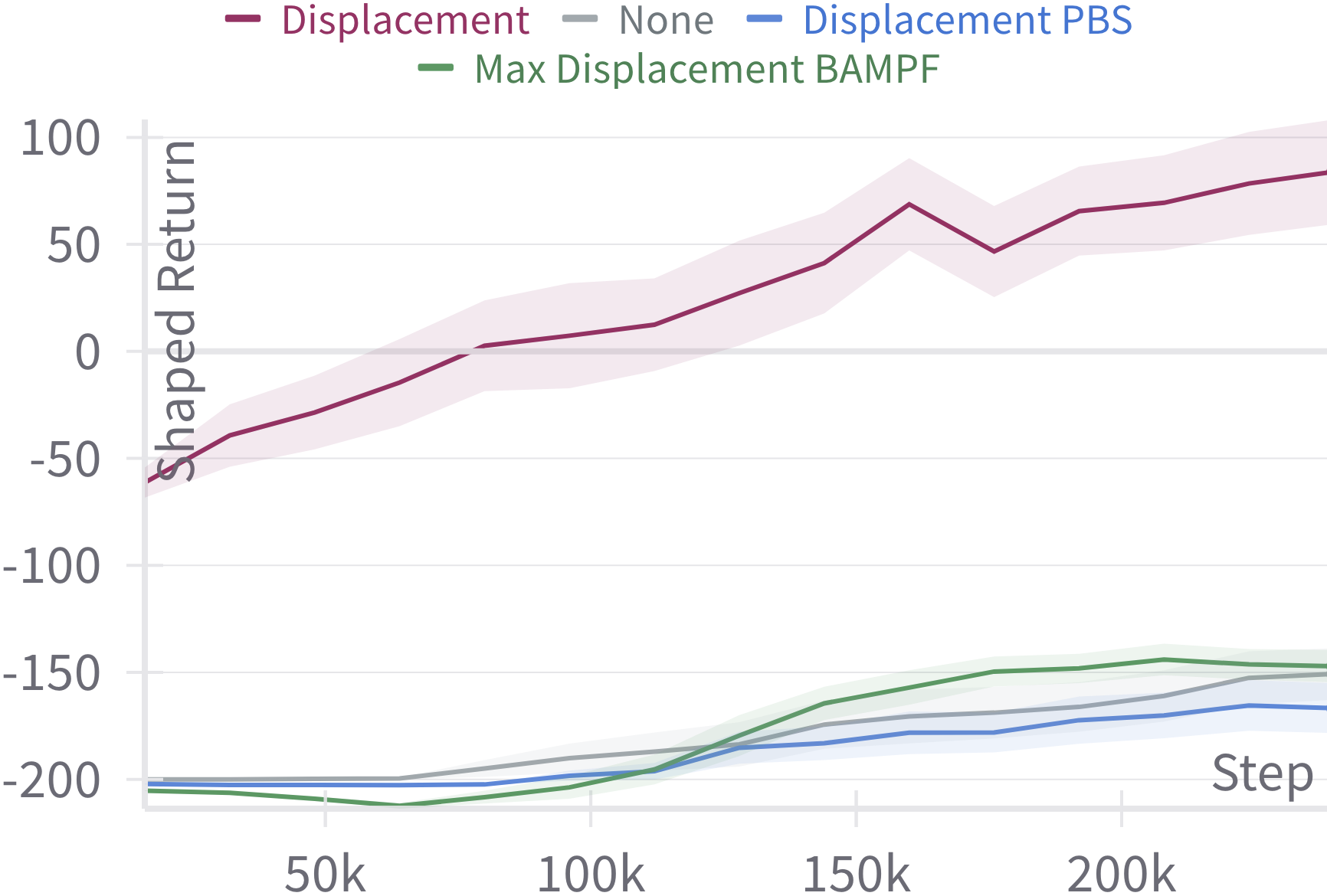}
    \caption{The (undiscounted) shaped episodic return for PPO trained with each type of pseudo-reward (mean and standard error of 10 seeds).}
    \label{fig:MCshaped}
\end{figure}

\section{Caterpillar Problem Analysis}\label{appendix:mining_calc}
 \subsection{Bayes-Optimal Policy Values}\label{appendix:opt_mining_calc}
 In section~\ref{subsection:definition} we describe the behavior for the Bayes-optimal algorithm: for large enough $\gamma$, $\alg^{*}$ should check $s_b$ first, then stay there forever if it find food, or else return to $s_w$ forever. Here we work through the math with $\gamma=0.95$

 From $\se_0$, the optimal RL algorithm $\alg^*$ would take the action maximizing expected return  under its knowledge $h_0$. If $\alg^*$ were to stay and $eat$ at $s_w$, then $h_1$ would contain no more information than $h_0$, so $\alg^{*}$ would make the same choice again and $eat$ forever. In that case, it would get a return of $\frac{21}{1-\gamma}=420$.
 
If $\alg^*$ were instead to first $go$ to $s_b$ and try to $eat$ there, its expected return would be:
\begin{equation}\label{eqn:catj1}
    -5+\gamma \mathbb{E}[\eQ^*(\langle s_b,h_1\rangle, eat)],
\end{equation}
where the first term is the energy cost of traveling. Now to calculate the second term we take an expectation over $p(M)$, i.e., take the weighted sum of the returns of an optimal RL algorithm in the presence and absence of food at $s_b$:
\begin{equation}
    \mathbb{E}[\eQ^*(\langle s_b,h_1\rangle, eat)] = 0.1\frac{150}{1-\gamma} + 0.9(0-5\gamma+\gamma^2\frac{21}{1-\gamma})=637,
\end{equation}
where the first term is the return if it finds food and thus eats at $s_b$ forever, and the second is the return if it finds no food and then goes back to eat at $s_w$ forever. Plugging this into \ref{eqn:catj1}, we get a total expected return of $600$. Since this is greater than $420$, the Bayes-optimal RL algorithm $\alg^*$ would follow this strategy.

\subsection{\Greedy{} Algorithm Value Calculations}\label{appendix:greedy_mining_calc}
In section~\ref{section:1step} we describe how the \Greedy{} RL algorithm would act in the caterpillar MDP example. Here we go through the full calculations.

Algorithm $\alg^\gsup$, assuming it had the correct prior $p(M)$, would estimate the values of following various $\pi$ as follows: 
 \begin{itemize}
     \item  $\pi_b$ goes to the bush and stays there: $E_{p(M)}[V^{\pi_b}(s_w)]=-5+0.1\times150\frac{\gamma}{1-\gamma}=280$
     \item $\pi_{alt}$ alternates between the plants: $E_{p(M)}[V^{\pi_{alt}}(s_w)]=-5\frac{1}{1-\gamma}=-100$
     \item $\pi_w$ eats at the weed forever: $E_{p(M)}[V^{\pi_w}(s_w)]=21\frac{1}{1-\gamma}=420$; and it would go from $s_b$ so $E_{p(M)}[V^{\pi_w}(s_b)]=-5+\gamma420=394$
     \item $\pi_{eat}$ always eats wherever it is, so $E_{p(M)}[V^{\pi_{eat}}(s_w)]=21\frac{1}{1-\gamma}=420$ and  $E_{p(M)}[V^{\pi_{eat}}(s_b)]=0.1\times150\frac{1}{1-\gamma}=300$
 \end{itemize}
Because $\pi_w$ gets the highest estimated value, $\alg^\gsup$ would choose to follow it, thus never learning about the bush and staying at the weed forever. 

As an example of $\alg^\gsup$ underestimating its own value, take its estimate of its value of eating from $\langle s_b,h_0\rangle$, i.e. if it were at the bush with no additional information. It assumes it would follow the best MDP policy under current information at the next step no matter what it found at $s_b$, which is still $\pi_w$, giving estimate:
\begin{equation}
    \hat{\eQ}^{\gsup}(\langle s_b,h_0\rangle,eat)=E_{p(M)}[R(s_w,eat)+\gamma V^{\pi_w}(s_b)]=0.1\times150+394\gamma=369
\end{equation}
However, this significantly underestimates the value. If $\alg^\gsup$ ate at $s_b$ and found no food, it would update to $\pi_w$ to go and eat at $s_w$, and if it did find food it would update to a $\pi$ that continues eating at $s_b$. This corresponds to a much higher true value: 
\begin{equation}
    \eQ^{\gsup}(\langle s_b,h_0\rangle,eat)=0.1\frac{150}{1-\gamma}+0.9(-5\gamma+\frac{21\gamma^2}{1-\gamma}))=637
\end{equation}

\end{document}